\DeclareMathOperator\supp{supp}
\DeclareMathOperator{\sign}{sign}
\newtheorem{example}{Example}
\newtheorem{definition}{Definition}
\newtheorem{assumption}{Assumption}
\title{MCGAN: Enhancing GAN Training with\\ Regression-Based Generator Loss}
\author{
    Written by AAAI Press Staff\textsuperscript{\rm 1}\thanks{With help from the AAAI Publications Committee.}\\
    AAAI Style Contributions by Pater Patel Schneider,
    Sunil Issar,\\
    J. Scott Penberthy,
    George Ferguson,
    Hans Guesgen,
    Francisco Cruz\equalcontrib,
    Marc Pujol-Gonzalez\equalcontrib
}
\begin{document}

 \maketitle

\begin{abstract}
Generative adversarial networks (GANs) have emerged as a powerful tool for generating high-fidelity data. However, the main bottleneck of existing approaches is the lack of supervision on the generator training, which often results in undamped oscillation and unsatisfactory performance. To address this issue, we propose an algorithm called Monte Carlo GAN (MCGAN). This approach, utilizing an innovative generative loss function, termly the regression loss, reformulates the generator training as a regression task and enables the generator training by minimizing the mean squared error between the discriminator's output of real data and the expected discriminator of fake data. 
We demonstrate the desirable analytic properties of the regression loss, including discriminability and optimality, and show that our method requires a weaker condition on the discriminator for effective generator training. These properties justify the strength of this approach to improve the training stability while retaining the optimality of GAN by leveraging strong supervision of the regression loss. Extensive experiments on diverse datasets, including image data (CIFAR-10/100, FFHQ256 and ImageNet), time series data (VAR and stock data) and video data, are conducted to demonstrate the flexibility and effectiveness of our proposed MCGAN. Numerical results show that the proposed MCGAN is versatile in enhancing a variety of backbone GAN models and achieves consistent and significant improvement in terms of quality, accuracy, training stability, and learned latent space. 
\end{abstract}

%

\section{Introduction}\label{sec:1}
In recent years, Generative  Adversarial  Network  (GAN) \cite{goodfellow2014generative} has become one of the most powerful tools for realistic image synthesis. However, the instability of the GAN training and unsatisfying performance remains a challenge. To combat it, much effort has been put into developing regularization methods, see \cite{gulrajani2017improved,mescheder2018training,miyato2018spectral,kang2022studiogan}. Additionally, as \cite{arjovsky2017towards} pointed out, the generator usually suffers gradient vanishing and instability due to the singularity of the denominator showed in the gradient when the discriminator becomes accurate. To address this issue, some work has been done to develop better adversarial loss \cite{lim2017geometric,mao2017least,arjovsky2017wasserstein}. As a variant of GAN, conditional GAN (cGAN) \cite{mirza2014conditional} is designed to learn the conditional distribution of target variable given conditioning information. It improves the GAN performance by incorporating conditional information to both the discriminator and generator, we hence have better control over the generated samples \cite{zhou2021omni,odena2017conditional}.
  
Unlike these works on the regularization method and adversarial loss, our work focuses on the generative loss function to enhance the performance of GAN training. In this paper, we propose a novel generative loss, termed as the regression loss $\mathcal{L}_{R}$, which reformulates the generator training as the least-square optimization task. This regression loss underpins our proposed MCGAN, an enhancement of existing GAN models achieved by replacing the original generative loss with our regression loss. This approach leverages the expected discriminator $D^{\phi}$ under the fake measure induced by the generator. Benefiting from the strong supervision lying in the regression loss, our approach enables the generator to learn the target distribution with a relatively weak discriminator in a more efficient and stable manner. 

The main contributions of our paper are three folds:
\begin{itemize}
    \item We propose the MCGAN methodology for enhancing both unconditional and conditional GAN training.
    \item We establish the theoretical foundation of the proposed regression loss, e.g., the discriminability, optimality, and improved training stability.  A simple but effective toy example of Dirac-GAN is provided to show that our proposed MCGAN successfully mitigates the non-convergence issues of conventional GANs by incorporating regression loss.
    \item We empirically validate the consistent improvements of MCGAN over various GANs across diverse data types (i.e., images, time series, and videos). Our approach improves quality, accuracy, training stability, and learned latent space, showing its generality and flexibility.   
\end{itemize}

\paragraph{Related work} GANs have demonstrated their capacity to simulate high-fidelity synthetic data, facilitating data sharing and augmentation. Extensive research has focused on designing GAN models for various data types, including images \cite{han2018gan}, time series \cite{yoon2019time,xu2020cot,ni2021sig}, and videos \cite{gupta2022rv}. Recently, Conditional GANs (cGANs) have gained significant attention for their ability to generate synthetic data by incorporating auxiliary information \cite{yoon2019time,liao2024sig, xu2019modeling}. For the integer-valued conditioning variable, conditional GANs can be roughly divided into two groups depending on the way of incorporating the class information: \textit{Classification-based}  and \textit{Projection-based} cGANs \cite{odena2017conditional,miyato2018cgans,kang2021rebooting,zhou2021omni,mirza2014conditional,hou2022conditional}. For the case where conditioning variable is continuous, the training of conditioning GANs is more challenging. For example, conditional WGAN suffers difficulty in estimating the conditional expected discriminator of real data due to the need for recalibration per every discriminator update \cite{liao2024sig}. Attempts are made to mitigate this issue, such as conditional SigWGAN \cite{liao2024sig}, which is designed to tackle this issue for time series data. 
\section{Preliminaries}
\subsection{Generative adversarial networks}
Generative adversarial networks (GANs) are powerful tools for learning the target distribution from real data to enable the simulation of synthetic data. To this goal, GAN plays a min-max game between two networks: \textit{Generator} ($G$) and \textit{Discriminator} ($D$). Let $\mathcal{X}$ denote the target space and $\mathcal{Z}$ be the latent space. Then \textit{Generator} $G^{\theta}$ is defined as a parameterised function that maps latent noise $z\in\mathcal{Z}$ to the target data $x\in\mathcal{X}$, where $\theta \in \Theta $ is the model parameter of $G$. \textit{Discriminator} $D^{\phi}: \mathcal{X} \to \mathbb{R}$ discriminates between the real data and fake data generated by the generator. 


Let $\mu$ and $\nu_{\theta}$ denote the true measure and fake measure induced by $G^{\theta}$. 
For generality, the objective functions of GANs can be written in the following general form: 
\begin{align}\label{eq:ganobj}
&\max_{\phi}\mathcal{L}_D(\phi;\theta)=\mathbb{E}_{\mu}\left[f_1(D^\phi(X))\right]+\mathbb{E}_{\nu_\theta}\left[f_2(D^\phi(X))\right],\notag\\
&\min_{\theta}\mathcal{L}_G(\theta;\phi)=\mathbb{E}_{\nu_\theta}\left[h(D^\phi(X))\right],
\end{align}
where $f_1$, $f_2$ and $h$ are real-valued functions. Different choices of $f_1$, $f_2$ and $h$ lead to different GAN models. 

There are extensive studies concerned with how to measure the divergence or distance between $\mu$ and $\nu_\theta$ as the improved GAN loss function, which are instrumental in stabilising the training and enhancing the generation performance. Examples include \emph{Hinge loss} \cite{lim2017geometric},  \emph{Wasserstein loss} \cite{arjovsky2017wasserstein}, \emph{Least squares loss} \cite{mao2017least}, \emph{Energy-based loss} \cite{zhao2016energy} among others. Many of them satisfy Eqn. \eqref{eq:ganobj}.
\begin{example}\label{example_GAN}
    \begin{itemize}
        \item classical GAN \cite{goodfellow2014generative}: $f_1(w)=\log(w)$ and $f_2(w)=-h(w)=\log(1-w)$.
        \item HingeGAN \cite{lim2017geometric}: $f_1(w)=f_2(-w) = -\max(0,1-w)$, and $h(w) =-w$.
        \item Wasserstein GAN \cite{arjovsky2017wasserstein} : $f_1(w)=f_2(-w) = w$, and $h(w) =-w+c_{\mu}$, where $c_{\mu}:=\mathbb{E}_{X \sim \mu} [D^{\phi}(X)]$.        
    \end{itemize}
\end{example}

The Wasserstein distance is linked with the mean discrepancy. More specifically, let $d_{\phi}(\mu, \nu)$ denote the mean discrepancy between any two distributions $\mu$ and $\nu$ associated with test function $D^{\phi}$ defined as $d_{\phi}(\mu, \nu) = \mathbb{E}_{X \sim \mu} [D^{\phi}(X)] - \mathbb{E}_{X \sim \nu} [D^{\phi}(X)]$. In this case, $\mathcal{L}_{G}(\theta; \phi)$ could be interpreted as $d_{\phi}(\mu, \nu_{\theta})$.

\subsection{Conditional GANs}
Conditional GAN (cGAN) is a conditional version of a generative adversarial network that can incorporate additional information, such as data labels or other types of auxiliary data into both the generator and discriminative loss \cite{mirza2014conditional}. The goal of conditional GAN is to learn the conditional distribution $\mu$ of the target data distribution $X\in\mathcal{X}$ (i.e., image ) given the conditioning variable (i.e., image class label) $Y\in\mathcal{Y}$. More specifically, under the real measure $\mu$, $X \times Y$ denote the random variable taking values in the space $\mathcal{X} \times \mathcal{Y}$. The marginal law of $X$ and $Y$ are denoted by $P_X$ and $P_Y$, respectively. 

The conditional generator $G^{\theta}: \mathcal{Y} \times \mathcal{Z} \rightarrow \mathcal{X}$ incorporates the additional conditioning variable to the noise input, and outputs the target variable in $\mathcal{X}$. Given the noise distribution $Z$, $G^{\theta}(y)$ induces the fake measure denoted by $\nu_{\theta}(y)$, which aims to approximate the conditional law of $\mu(y):=P(X | Y=y)$ under real measure $\mu$. The task of training an optimal conditional generator is formulated as the following min-max game:
\begin{align} \label{orign_gloss}
&\mathcal{L}_D(\phi, \theta)=\mathbb{E}_{y \sim P_{Y}}\left[\mathbb{E}_{\mu(y)}[f_1(D^\phi(X))]+\mathbb{E}_{\nu_\theta(y)}[f_2( D^\phi(X)]\right],\notag\\
&\mathcal{L}_G(\theta;\phi)= \mathbb{E}_{y \sim P_{Y}}\left[\mathbb{E}_{\nu_\theta(y)}[h(D^\phi(X))]\right],
\end{align}

where $f_1$, $f_{2}$ and $h$ are real value functions as before. Different from the unconditional case, $\mathcal{L}_D$ and $\mathcal{L}_G$ has in the outer expectation $\mathbb{E}_{y \sim P_{Y}}$ due to $Y$ being a random variable.


\section{Monte-Carlo GAN}
\subsection{Methodology}
In this section, we propose the Monte-Carlo GAN (MCGAN) for both unconditional and conditional data generation. Without loss of generality, we describe our methodology in the setting of the conditional GAN task.\footnote{The unconditional GAN can be viewed as the conditioning variable is set to be the empty set.} 
Consider the general conditional GAN model composed with the generator loss $\mathcal{L}_{G}$ (Eqn. \eqref{orign_gloss}) and the discrimination loss $\mathcal{L}_{D}$ outlined in the last subsection. To further enhance the GAN model, we propose the following MCGAN by replacing the generative loss $\mathcal{L}_{G}$  with the following novel regression loss for training the generator from the perspective of the regression, denoted by $\mathcal{L}_{R}$, 
  \begin{eqnarray}\label{eq:unobj}      \mathcal{L}_R(\theta; \phi):=\mathbb{E}_{(x, y) \sim \mu}\left[|D^\phi(x)- \mathbb{E}_{\hat{x} \sim \nu_{\theta}(y)}[D^\phi(\hat{x})]|^2\right],
    \end{eqnarray}
where the expectation is taken under the joint law $\mu$ of $X$ and $Y$.
We optimize the generator's parameters $\theta$ by minimizing the regression loss $\mathcal{L}_R(\theta; \phi)$. We keep the discriminator loss and conduct the min-max training as before. The training algorithm of MCGAN is given in Algorithm \ref{algo:mcgan} in Appendix.

The name for Monte Carlo in MCGAN is due to the usage of the Monte Carlo estimator of expected discriminator output under the fake measure. This innovative loss function reframes the conventional generator training into a mean-square optimization problem by computing the $l^{2}$ loss between real and expected fake discriminator outputs. 
 
Next, we explain the intuition behind $\mathcal{L}_{G}$ and its link with optimality of conditional expectation. Let us consider a slightly more general optimization problem for $\mathcal{L}_{R}$:
\begin{eqnarray}\label{l2_loss}
  \min_{f \in \mathcal{C}(\mathcal{Y}, \mathbb{R})} \mathbb{E}_{\mu}[|D^\phi(X) - f(Y)|^2 ],
\end{eqnarray}
It is well known that the conditional expectation is the optimal $l^2$ estimator. Therefore, the \textbf{unique minimizer} to Eqn \eqref{l2_loss} is given by the conditional expectation function $f^{*}: \mathcal{Y} \rightarrow \mathbb{R}$, defined as
\begin{eqnarray*}
f^{*}(y) = \mathbb{E}_{\mu}[D^{\phi}(X) | Y=y].
\end{eqnarray*}
This fact motivates us to consider the conditional expectation under the fake measure, $\mathbb{E}_{\nu_{\theta}(Y)}[D^\phi(X)]$, as the model for the mean equation $f^{*}$. It leads to our regression loss $\mathcal{L}_{R}$, where we replace $f$ by $\mathbb{E}_{\nu_{\theta}(Y)}[D^\phi(X)]$ in Eqn. \eqref{l2_loss}.

Minimising the regression loss $\mathcal{L}_{G}$ would enforce the conditional expectation of $D^{\phi}(X)$ under fake measure $\nu_{\theta}(Y)$ to approach that under the conditional true distribution $\mu(Y)=\mathbb{P}(X | Y)$ for any given $D^{\phi}$.  Suppose that $(G^{\theta})_{{\theta} \in {\Theta}}$ provides a rich enough family of distributions containing the real distribution $\mu$. Then there exists $\theta^{*} \in \Theta$, which is a minimizer of $\mathcal{L}_R(\theta, \phi)$ for all possible discriminator's parameter $\phi$, and it satisfies that 
\begin{eqnarray}
\mathbb{E}_{\mu(Y)}[D^{\phi}(X)] = \mathbb{E}_{\nu_{\theta^*}(Y)}[D^{\phi}(X)].
\end{eqnarray}
It implies that no matter whether the discriminator $D^\phi$ achieves the equilibrium of GAN training, the regression loss $\mathcal{L}_{R}$ is a valid loss to optimize the generator to match its expectation of $D^{\phi}$ between true and fake measure.

Moreover, we highlight that our proposed regression loss can effectively mitigate the challenge of the conditional Wassaserstain GAN (c-WGAN). To compute the generative loss of c-WGAN, one needs to estimate the conditional expectation $\mathbb{E}_{\mu(Y)}[D^{\phi}(X)]$. However, when the conditioning variable is continuous, estimating this conditional expectation becomes computationally expensive or even infeasible due to the need for recalibration with each discriminator update. In contrast, our regression loss does not need the estimator for $\mathbb{E}_{\mu(Y)}[D^{\phi}(X)]$.



\subsection{Comparison between $\mathcal{L}_{R}$ and $\mathcal{L}_{G}$}

In this subsection, we delve into the training algorithm of the regression loss $\mathcal{L}_{R}$ and illustrate its advantages of enhancing the training stability in comparison with the generator loss $\mathcal{L}_{G}$. For ease of notation, we consider the unconditional case. To optimize the generator's parameters $\theta$ in our MCGAN, we apply gradient-descent-based algorithms and the updating rule of $\theta_n$ is given by  
\begin{align}\label{eq:gradientMCGAN}
\theta_{n+1}=&\theta_{n}-\lambda\frac{\partial \mathcal{L}_{R}}{\partial\theta}|_{\theta=\theta_n}\\
=&\theta_n-2\lambda \underbrace{\left(\mathbb{E}_{\mu}[D^{\phi}(X)]-\mathbb{E}_{\nu_{\theta_n}}[D^{\phi}(X)]\right)}_{d_{\phi}(\mu, \nu_{\theta_n})} H(\theta_n,\phi) \notag
\end{align}
where $\lambda$ is the learning rate and
\begin{eqnarray}\label{eqn_H}
H(\theta,\phi)={\mathbb{E}}_{z\sim P_Z}[\nabla_{\theta}G^{\theta}(z)^T\cdot \nabla_x D^\phi(G^{\theta}(z))].
\end{eqnarray}

Note the gradient $\frac{\partial \mathcal{L}_{R}}{\partial \theta}$ takes into account not only  $\nabla_x D^\phi(x)$ but also $d(\mu, \nu_{\theta})$ - the discrepancy between the expected discriminator outputs under two measures $\mu$ and $\nu_\theta$.

In contrast, employing the generator loss $\mathcal{L}_{G}$, the generator parameter $\theta$ is updated by the following formula:

\begin{align}\label{eq:f_G_update}
\theta_{n+1} = & \theta_n - \lambda {\mathbb{E}}_{z\sim P_Z}\Big[ h'(D^\phi(G^{\theta_n}(z)))  \nabla_{\theta} G^{\theta}(z)^T \Big|_{\theta=\theta_n} \notag \\
& \cdot \nabla_x D^\phi(G^{\theta_n}(z)) \Big]. 
\end{align}

One can see that Eqn. \eqref{eq:f_G_update} depends on the discriminator gradients $\nabla_x D^\phi(G^{\theta_n}(z))$ heavily. 

MCGAN benefits from the strong supervision of $\mathcal{L}_{R}$, which provides more control over the gradient behaviour during the training. When $\theta$ is close to the optimal $\theta^{*}$, even if $D^{\phi}$ is away from the optimal discriminator, $d_{\phi}(\mu, \nu_{\theta})$ would be small and hence leads to stabilize the generator training. However, it may not be the case for the generator loss as shown in Eq. \eqref{eq:f_G_update}, resulting in the instability of generator training. For example, this issue is evident for the Hinge loss where $h(x) = x$ as shown in \cite{mescheder2018training}.

\subsection{Illustrative Dirac-GAN example}
To illustrate the advantages of MCGAN, we present a toy example from \cite{mescheder2018training}, 
demonstrating its resolution of the training instability in Dirac-GAN. The Dirac-GAN example involves a true data distribution that is a Dirac distribution concentrated at 0. Besides, the Dirac-GAN model consists of a generator with a fake distribution $\nu_\theta(x) = \delta(x - \theta)$ with  $\delta(\cdot)$ is a Dirac function and a discriminator $D^{\phi}(x) = \phi x$.

We consider three different loss functions for both $\mathcal{L}_{D}$ and $\mathcal{L}_{G}$: (1) \emph{binary cross-entropy loss} (BCE), (2) \emph{Non-saturating loss} and (3) \emph{Hinge loss}, resulting GAN, NSGAN and HingeGAN, respectively. In this case, the unique equilibrium point of the above GAN training objectives is given by $\phi=\theta=0$. 

In this case, the updating scheme of training GAN is simplified to
$$
\begin{cases}
\phi_{n+1}=\phi_{n}+\lambda f'(-\phi_n\theta_n)\theta_n,\\
    \theta_{n+1}=\theta_{n}-\lambda h'(\phi_n\theta_n)\phi_{n}.
\end{cases}
$$
where $f$ is specified as $f(x) = -\log(1 + \exp(x))$.
By applying MCGAN to enhance GAN training, the update rules for the model parameters $\theta$ and $\phi$ are modified as follows:
$$
\begin{cases}
\phi_{n+1}&=\phi_{n}+\lambda f'(\phi_n\theta_n)\theta_n,\\
\theta_{n+1}&=\theta_{n}-\lambda2(\phi_n\theta_n-\phi_nc)\phi_{n}.
\end{cases}
$$

Fig. \ref{fig:diracgan} (a-c) demonstrates that GAN, NSGAN and Hinge GAN all fail to converge to obtain the optimal generator parameter $\theta^{*} = 0$. That is because the updating scheme of $\theta$ depends heavily on the $\phi$. When $\phi$ fails to converge to zero, $\theta$ continues to update even if it has reached zero, and the non-zero $\theta$ further encourages $\phi$ updating away from $0$, which results in a vicious cycle and the failure of both generator and discriminator. In contrast, Fig. \ref{fig:diracgan}(d) of MCGAN training demonstrates that the generator parameter $\theta$ successfully converges to the optimal value $0$ thanks to the regression loss in \eqref{eq:unobj} to bring the training stability of the generator. 

\begin{figure}[ ht]
\hspace{0.05\linewidth}
     \begin{subfigure}[t]{0.4\linewidth}
         \centering
\includegraphics[width=\textwidth]{Figures/DiracGAN/Standard GAN_Reg_None_IN_False.pdf}
         \caption{GAN}
         \label{fig:GAN}
     \end{subfigure}
      \hspace{0.05\linewidth}
     \begin{subfigure}[t]{0.4\linewidth}
         \centering
    \includegraphics[width=\textwidth]{Figures/DiracGAN/Non-saturating GAN_Reg_None_IN_False.pdf}
         \caption{NSGAN}
         \label{fig:NS}
     \end{subfigure}
     \hspace{0.05\linewidth}
     \vskip\baselineskip
     \hspace{0.05\linewidth}
     \begin{subfigure}[t]{0.4\linewidth}
         \centering\includegraphics[width=\textwidth]{Figures/DiracGAN/Hinge GAN_Reg_None_IN_False.pdf}
         \caption{HingeGAN}
         \label{fig:HINGE}
     \end{subfigure}
    \hspace{0.05\linewidth}
    \begin{subfigure}[t]{0.4\linewidth}
         \centering
         \includegraphics[width=\textwidth]{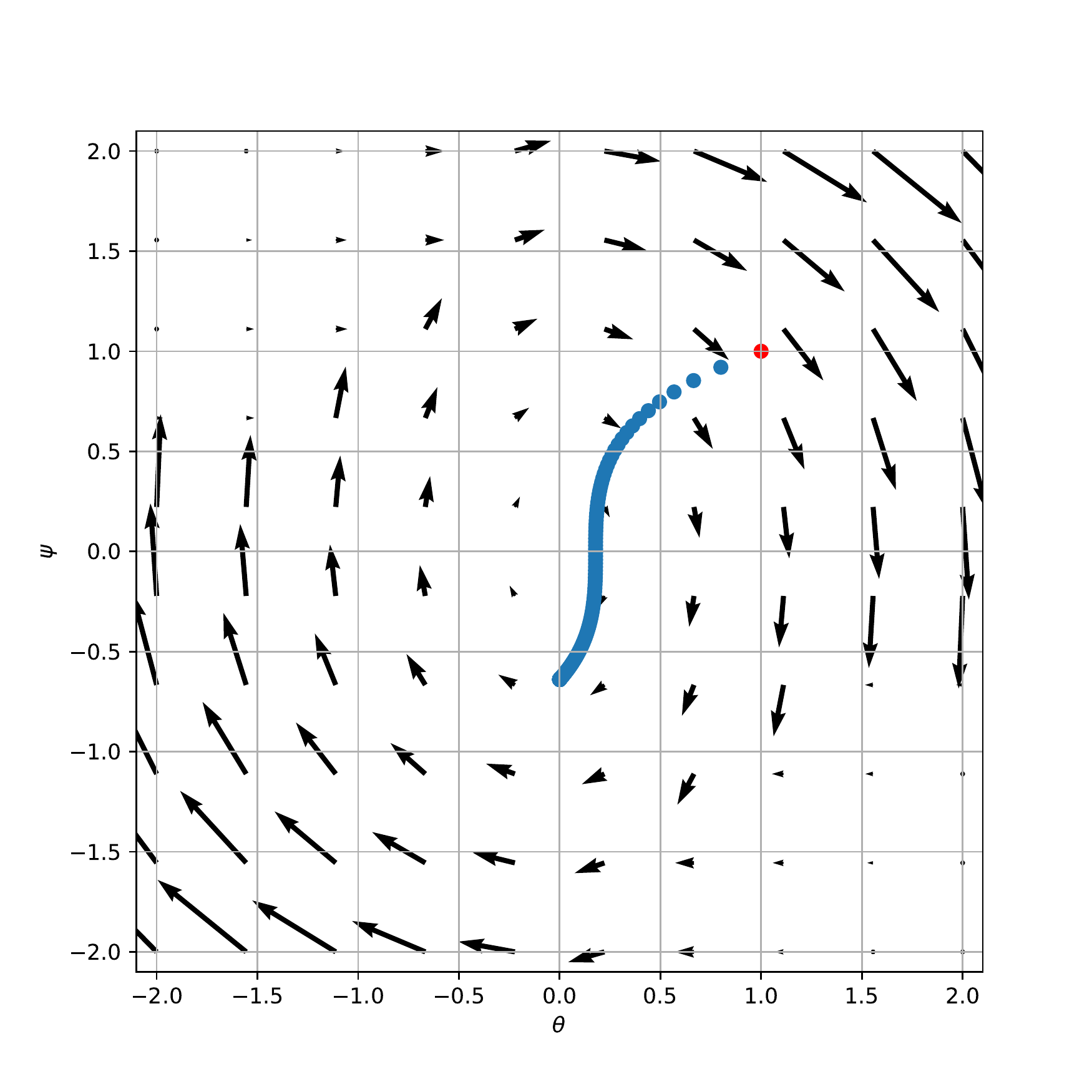}
         \caption{MCGAN}
         \label{fig:MCGAN}
     \end{subfigure}
     \hspace{0.05\linewidth}
        \caption{Dirac-GAN example}
        \label{fig:diracgan}
\end{figure}

\subsection{Discriminability and optimality of MCGAN}
To ensure that MCGAN training leads to the optimal generator $\nu_{\theta^{*}}=\mu$, one needs the sufficient discriminative power of $D^{\phi}$. The discriminative power of $D^{\phi}$ is determined by the discriminative loss function $\mathcal{L}_{D}$, which is usually defined as certain divergences, such as JS divergence in GAN \cite{goodfellow2014generative}. However, computing such divergence involves finding the optimal discriminator that optimizes the objective function, which might be challenging in practice. See \cite{liu2017approximation} for a comprehensive description of the discriminative loss function. 

Instead of requiring an optimal discriminator, we introduce the weaker condition, the so-called \emph{discriminability} of the discriminator $D^\phi$ to ensure the optimality of the generator for the MCGAN training. 



\begin{definition}[Discriminability]\label{strict_dis_ability}
 A discriminator
 \begin{equation}
\begin{aligned}
\mathcal{P}(\mathcal{X}) \times \mathcal{P}(\mathcal{X}) \times \mathcal{X}&\rightarrow \mathbb{R} \\
(\mu,\nu,x) &\mapsto D^{\phi_{\mu,\nu}} (x),
\end{aligned}
\end{equation}
where $\phi_{\cdot,\cdot}:\mathcal{P}(\mathcal{X}) \times \mathcal{P}(\mathcal{X}) \rightarrow \Phi$, is said to have discriminability if there exist two constants $a\in\{-1,1\}$ and $c\in \mathbb{R}$ such that for any two measures $\mu,\nu\in\mathcal{P(X)},$  it satisfies that
\begin{equation}\label{eq:strictdisc}a(D^{\phi_{\mu,\nu}}(x)-c)(p_{\mu}(x)-p_\nu(x))>0,
\end{equation}
for all $x\in\mathcal{A}^{\mu,\nu}:=\{x\in\mathcal{X}:p_{\mu}(x)\neq p_\nu(x)\}.$  We denote the set of discriminators with discriminability as $\mathcal{D}_{\text{Dis}}.$
\end{definition}

The discriminability of the discriminator can be interpreted as the ability to distinguish between $\nu$ and $\mu$ point-wisely over $\mathcal{A}^{\mu,\nu}$ by telling the sign (or the opposite sign) of $p_{\mu}(x)-p_{\nu}(x)$. In \eqref{eq:strictdisc}, if $a=1$, the constant $c$ can be regarded as a criterion in the sense that $D^{\phi_{\mu,\nu}}(x)-c$ is positive when $p_{\mu}(x)>p_{\nu}(x)$ and vice versa.

\begin{table}[h]
    \begin{center}
    \resizebox{\linewidth}{!}{%
    \begin{tabular}{ccccc}
    \toprule
    Name & Discriminative loss & $D^*(x)$  & $a$ & $c$\\
    \midrule
    Vanilla GAN  & Binary cross-entropy &  $\frac{p_\mu(x)}{p_\mu(x)+p_{\nu_{\theta}}(x)}$  & $1$ & $1/2$\\
    LSGAN  & Least square loss &  $\frac{\alpha p_\mu(x)+\beta p_{\nu_{ \theta}}(x)}{p_\mu(x)+p_{\nu_{ \theta}}(x)}$ & {$\sign{(\alpha-\beta)}$} & $\frac{\alpha+\beta}{2}$\\
    Hinge GAN & Hinge loss & $2\mathds{1}_{\{p_\mu(x)\geq p_{\nu_{\theta}}(x)\}}-1$  & $1$ & $0$\\
    Energy-based GAN & Energy-based loss & $m\mathds{1}_{\{p_\mu(x)<p_{\nu_{\theta}}(x)\}}$  & {$\sign{(-m)}$} & $\frac{m}{2}$\\
    $f$-GAN & VLB on $f$-divergence & $f'\left(\frac{p_\mu(x)}{p_{\nu_{\theta}}(x)}\right)$  & $1$ & $f'(1)$\\
    \bottomrule
    \end{tabular}}
    \caption{ List of common discriminative loss functions that satisfy strict discriminability}
    \label{tab:discriminability}
    \end{center}
\end{table}

The discriminability covers a variety of optimal discriminators in GAN variants. We present in Table \ref{tab:discriminability} a list of optimal discriminators of some commonly used GAN variants along with their values of $a$ and $c$. The detailed description can be found in Appendix \ref{appendix:discriminability}. Although discriminability can be obtained by training the discriminator via certain $\mathcal{L}_{D}$, it is worth emphasizing that the discriminator does not necessarily need to reach its optimum to obtain discriminability.

\begin{assumption}\label{assumption:H}
Let $H$ be defined in Eqn. \eqref{eqn_H}. The equality $H(\theta,\phi)=\vec{0}$ holds only if $(\theta,\phi)$ reaches the equilibrium point where $\nu_\theta=\mu$.
\end{assumption}
Now, we establish the optimality of $\mu=\nu_\theta$ in the following theorem under the regularity condition defined in Assumption \ref{assumption:H}. 
\begin{restatable}{thm}{theoremg}\label{theorem:g}
Assume Assumption \ref{assumption:H} holds, and let  $\phi'_{\cdot,\cdot}:\mathcal{P(X)}\times\mathcal{P(X)}\rightarrow\Phi$ be a parameterization map such that $D^{\phi'_{\cdot,\cdot}}:\mathcal{P}(\mathcal{X}) \times \mathcal{P}(\mathcal{X}) \times \mathcal{X}\rightarrow \mathbb{R}$ has discriminability, i.e. $D^{\phi'_{\cdot,\cdot}}\in \mathcal{D}_{\text{Dis}}$. If $\theta^*$ is a local minimizer of $\mathcal{L}_{G}(\theta;\phi'_{\mu,\nu_{\theta}},\mu)$ defined in \eqref{eq:unobj}, then $\nu_{\theta^*} =\mu$.
\end{restatable}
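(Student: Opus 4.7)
The plan is to exploit the first-order optimality condition at $\theta^*$ together with the explicit gradient formula in Eqn.~\eqref{eq:gradientMCGAN}, and then split the resulting equation into two cases that are each pinned down, respectively, by Assumption~\ref{assumption:H} and by the discriminability property of $D^{\phi'_{\mu,\nu_\theta}}$.

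First, I would write down the necessary condition $\nabla_\theta \mathcal{L}_R(\theta^*;\phi'_{\mu,\nu_{\theta^*}}) = 0$, with the discriminator parameter frozen at its $\theta^*$-dependent value (as is standard when analysing the saddle point of the min-max objective). Substituting into the gradient identity \eqref{eq:gradientMCGAN} gives
\begin{equation*}
d_{\phi'_{\mu,\nu_{\theta^*}}}(\mu,\nu_{\theta^*}) \cdot H(\theta^*,\phi'_{\mu,\nu_{\theta^*}}) = \vec{0},
\end{equation*}
so that at least one of the two factors vanishes. If $H(\theta^*,\phi'_{\mu,\nu_{\theta^*}}) = \vec{0}$, Assumption~\ref{assumption:H} delivers $\nu_{\theta^*} = \mu$ immediately, and we are done.

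The remaining case is the scalar mean-discrepancy factor vanishing, i.e.\ $\mathbb{E}_\mu[D^{\phi'_{\mu,\nu_{\theta^*}}}(X)] = \mathbb{E}_{\nu_{\theta^*}}[D^{\phi'_{\mu,\nu_{\theta^*}}}(X)]$. Here I would use the fact that subtracting the constant $c$ from discriminability leaves both expectations unchanged by constant contributions (since $\mu$ and $\nu_{\theta^*}$ are probability measures), so the equality is equivalent to
\begin{equation*}
\int_{\mathcal{X}} \bigl(D^{\phi'_{\mu,\nu_{\theta^*}}}(x) - c\bigr)\bigl(p_\mu(x) - p_{\nu_{\theta^*}}(x)\bigr)\,dx \;=\; 0.
\end{equation*}
By Definition~\ref{strict_dis_ability} the integrand multiplied by $a\in\{-1,1\}$ is strictly positive on $\mathcal{A}^{\mu,\nu_{\theta^*}}$ and identically zero on its complement. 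Since the integral against a nonnegative function that is strictly positive on $\mathcal{A}^{\mu,\nu_{\theta^*}}$ can only vanish when $\mathcal{A}^{\mu,\nu_{\theta^*}}$ has Lebesgue measure zero, we obtain $p_\mu = p_{\nu_{\theta^*}}$ almost everywhere, hence $\nu_{\theta^*} = \mu$.

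The principal obstacle I foresee is not the algebraic manipulation but the care required in interpreting the gradient: the map $\theta\mapsto \phi'_{\mu,\nu_\theta}$ makes $\mathcal{L}_R$ implicitly depend on $\theta$ through the discriminator, so a fully rigorous first-order statement needs either a chain-rule argument that treats the $\phi$-dependence (and shows the extra term does not obstruct the conclusion) or, following the usual convention in GAN analyses, a justification that one may freeze $\phi$ at $\phi'_{\mu,\nu_{\theta^*}}$ at the critical point. Apart from this bookkeeping, each case of the dichotomy then follows cleanly: Assumption~\ref{assumption:H} handles the vanishing of $H$, and discriminability converts the vanishing of $d_\phi(\mu,\nu_{\theta^*})$ into equality of densities.
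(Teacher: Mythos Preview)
Your proposal is correct and follows essentially the same route as the paper: take the first-order condition, factor the gradient via \eqref{eq:gradientMCGAN} into $d_\phi(\mu,\nu_{\theta^*})\cdot H(\theta^*,\phi)$, dispatch the $H=\vec{0}$ case by Assumption~\ref{assumption:H}, and use discriminability to force the integral $\int (D^{\phi'}-c)(p_\mu-p_{\nu_{\theta^*}})\,dx$ to be strictly positive unless $\nu_{\theta^*}=\mu$. The paper proceeds identically (it simply sets $a=1,c=0$ without loss of generality and does not discuss the $\theta$-dependence of $\phi'$ that you flag), so your only extra content is the bookkeeping concern about freezing $\phi$, which the paper also leaves implicit.
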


Theorem \ref{theorem:g} implies that MCGAN can effectively learn the data distribution $\mu$ without requiring the discriminator to reach its optimum; the discriminability is sufficient, which is again attributed to the strong supervision provided by regression loss $\mathcal{L}_{R}$. We defer the proof of Theorem \ref{theorem:g} and other theoretical properties of MCGAN, e.g., improved training stability and relation to $f$-divergence to the Appendix.

\section{Numerical experiments}\label{sec:experiment}
To validate the efficacy of the proposed MCGAN method, we conduct extensive experiments on a broad range of data, including image, time series, and video data for various generative tasks. For image generation, the conditioning variables are categorical, whereas for time series and video generation tasks, the conditioning variables are continuous. To show the flexibility of MCGAN to enhance different GAN backbones, we choose several state-of-the-art GAN models with different discriminative losses (i.e., BCE and Hinge loss) as baselines. Various test metrics and qualitative analysis are employed to give a comprehensive assessment of the quality of synthetic data generation.  
 
The full implementation details of numerical
experiments, including models, test metrics, hyperparameters, optimizer and supplementary numerical results, can be found in Appendix \ref{appendix:numdetail}. Moreover, we will open-source the codes and final checkpoints upon publication for reproducibility.

\subsection{Unconditional and conditional image generation}
\subsubsection{Datasets}
We conduct conditional image generation tasks using the CIFAR-10 and CIFAR-100 datasets \cite{cifardataset}, which are standard benchmarks with 60K 32x32 RGB images across 10 and 100 classes, respectively. To further validate our MC method on larger and higher-resolution datasets, we include the unconditional FFHQ256 dataset, which contains 70K 256x256 human face images, and the conditional ImageNet64 dataset, which has 1.2 million 64x64 images across 1,000 classes, serving as a challenging benchmark due to its diversity.

We validate our method using two different backbones, BigGAN
\cite{brock2018large} and StyleGAN2 \cite{karras2020analyzing}. The test metrics include
\emph{Inception Score} (IS), \emph{Fr\'echet Inception Distance} (FID), and
\emph{Intra Fr\'echet Inception Distance} (IFID) together with two recognizability
metrics \emph{Weak Accuracy} (WA) and \emph{Strong Accuracy} (SA). To alleviate the overfitting and improve the generalization,  we also increase data efficiency by using the
\emph{Differentiable Augmentation} (DiffAug) \cite{zhao2020differentiable}.

In the following, we mainly focus on the CIFAR-10 dataset for in-depth analysis, with a brief summary of the results on the other datasets.

\subsubsection{Faster training convergence}
In Figure \ref{fig:CIFAR-10_fidis_BigGAN}, we plot the learning curves in terms of FID and IS during the training. It shows that the MC method tends to have much faster convergence and ends at a considerably better level in both baselines of using Hinge loss and BCE loss. 

\subsubsection{Improved fidelity metrics}
As shown in Table \ref{table_CIFAR-10}, our MC method considerably improves all the baselines independently of the choice of discriminative loss ($\mathcal{L}_{D}$). Specifically, when using Hinge loss as $\mathcal{L}_{D}$ along with DiffAug, the MC method improves the FID from 4.43 to 3.61, comparable to the state-of-the-art FID result of \cite{kang2022studiogan}. Also, its IS score is significantly increased from 9.61 to 9.96, indicating better diversity of the generated samples.

In addition, applying the MC method to the cStyleGAN2 backbone results in an FID improvement of approximately 0.08. Notably, the combination of Hinge + MC + DiffAug achieves an FID of 2.16, which, to our knowledge, is the best FID achieved using StyleGAN2 as the backbone \cite{kang2021rebooting,kang2022studiogan,tseng2021regularizing} 

\newcolumntype{Y}{>{\centering\arraybackslash}X}
\begin{table}[!htbp]
\centering
\resizebox{\columnwidth}{!}{%
\begin{tabularx}{0.9\textwidth}{l| Y|Y| Y||Y|Y|Y}
\toprule
    \multicolumn{1}{l|}{Loss} & \multicolumn{3}{c||}{Hinge} &   \multicolumn{3}{c}{BCE}   
\\\midrule
\multicolumn{1}{l|}{Metrics} & \multicolumn{1}{c|}{IS $\uparrow$} &   \multicolumn{1}{c|}{FID $\downarrow$} &
\multicolumn{1}{c||}{IFID $\downarrow$}&
\multicolumn{1}{c|}{IS $\uparrow$} &   \multicolumn{1}{c|}{FID $\downarrow$} & \multicolumn{1}{c}{IFID $\downarrow$}
\\\midrule
BigGAN  & 9.27 \textcolor{gray}{\scriptsize $\pm$ 0.11}   & 5.31 & 16.20 & 9.30 \textcolor{gray}{\scriptsize $\pm$ 0.14}  & 5.55  & 16.62
\\\midrule
 \quad+DiffAug   & 9.61 \textcolor{gray}{\scriptsize $\pm$ 0.06} &  4.43 & 14.60 & 9.51 \textcolor{gray}{\scriptsize $\pm$ 0.11}  &  4.71  & 14.83
\\
 \quad+MC   & 9.66 \textcolor{gray}{\scriptsize $\pm$ 0.09}  &  4.51 & 14.71 & 9.62 \textcolor{gray}{\scriptsize $\pm$ 0.09}     &  4.61 &  14.82 
\\
 \quad+MC+DiffAug  & \textbf{9.96} \textcolor{gray}{\scriptsize $\pm$ 0.12} &  \textbf{3.61} & \textbf{13.60}& \textbf{9.94} \textcolor{gray}{\scriptsize $\pm$ 0.10} &  \textbf{3.93} & \textbf{13.72}
 \\\midrule
 StyleGAN2 & -& -& - & 10.17 \textcolor{gray}{\scriptsize $\pm$ 0.12} &  3.7& 14.04\\\midrule
 \quad+DiffAug   & 10.19 \textcolor{gray}{\scriptsize $\pm$ 0.11} &  2.25& 11.40& 10.03 \textcolor{gray}{\scriptsize $\pm$ 0.09}&  2.44& 11.62\\
\quad+MC+DiffAug  & \textbf{10.26} \textcolor{gray}{\scriptsize $\pm$ 0.08} & \textbf{2.16} & \textbf{11.04} & \textbf{10.10} \textcolor{gray}{\scriptsize $\pm$ 0.11} &\textbf{2.36} & \textbf{11.30}
 \\\bottomrule
\end{tabularx}}
\caption{Quantitative results of image generation on CIFAR-10 using BigGAN/StyleGAN2 w/o and with our MC method and Differentiable Augmentation.}\label{table_CIFAR-10}
\end{table}

\begin{figure}[ ht]
     \centering
     \begin{subfigure}[b]{0.48\linewidth}
         \centering
         \includegraphics[width=1\textwidth]{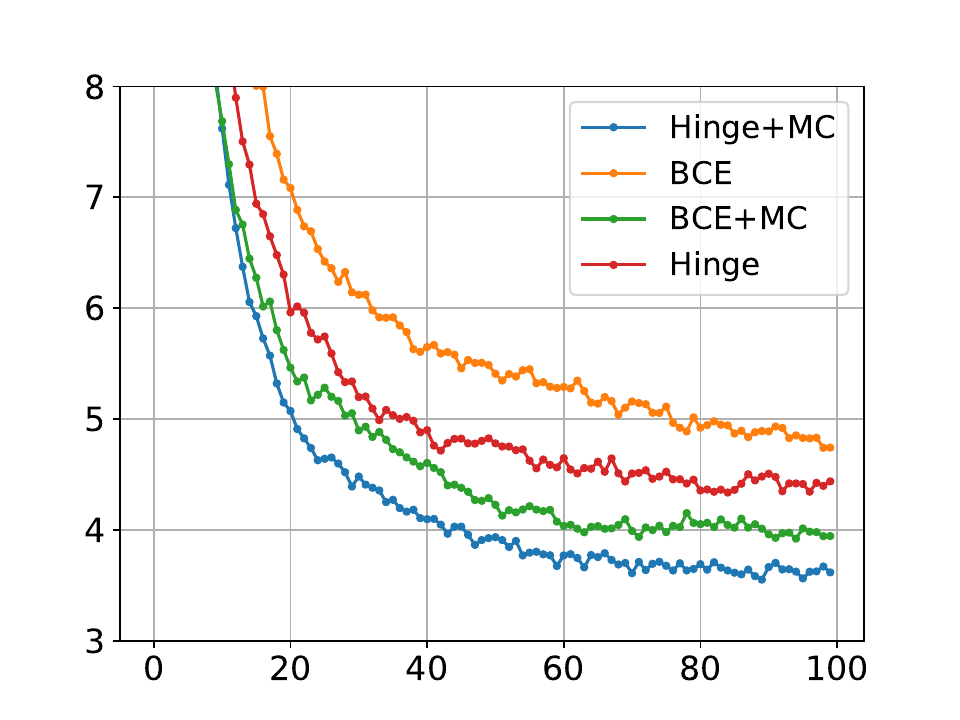}
         \caption{FID ↓}
         \label{fig:CIFAR-10_fid}
     \end{subfigure}
     \hfill
     \begin{subfigure}[b]{0.48\linewidth}
        \centering
        \includegraphics[width=1\textwidth]{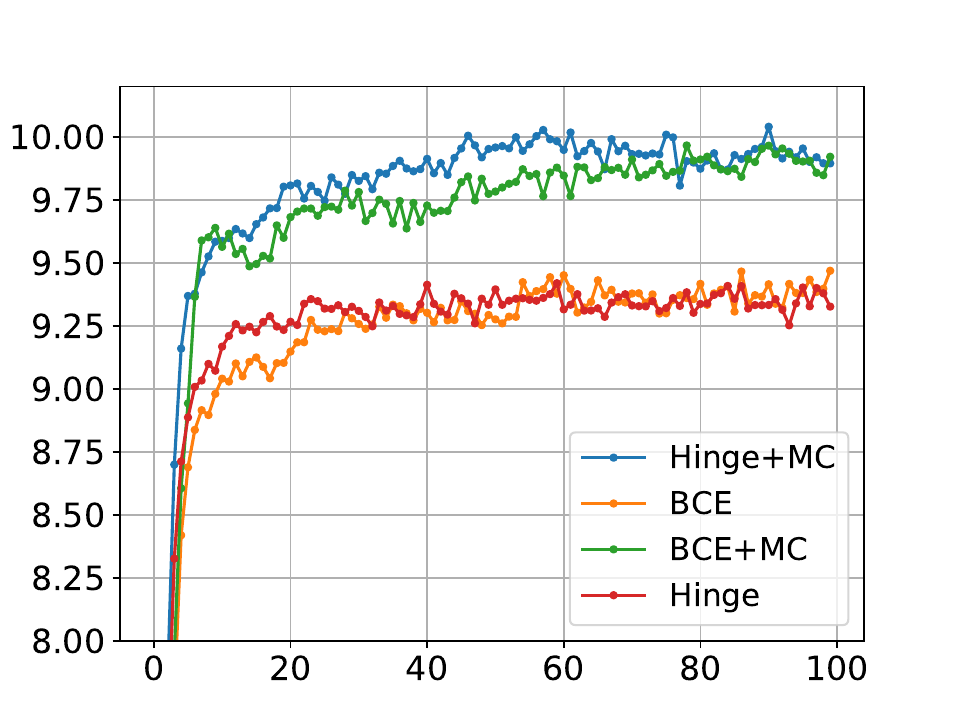}
        \caption{IS ↑}
        \label{fig:CIFAR-10_is}
     \end{subfigure}
     \caption{The learning curves in terms of (a) Fr\'echet Inception Distance and (b)Inception Score along the training on the CIFAR-10 dataset using BigGAN with different loss combinations.}\label{fig:CIFAR-10_fidis_BigGAN}
\end{figure}

\subsubsection{Improved recognizability metrics} We generated 10k (the same setting as the test set) images using the BigGAN backbone. The WA rates are 62.56\%, 52.09\%, and 54.71\% for
the real test set, the generated set from Hinge baseline, and the generated set from Hinge + MC, respectively. Our MC method's images perform closer to the
real test set than the baseline's, showing better distribution matching to the real data in terms of recognizability. The SA rate of our MC method is
83.42\% compared to 93.65\% of the real test set, showing that we generate fairly
recognizable fake images.

\subsubsection{Qualitative results}
The qualitative results are shown in Figure \ref{fig:cifar10_biggan} and Figure \ref{fig:stylegan2} in Appendix with only a small amount of images (in red boxes) misclassified by our classifier.

\begin{figure}[ht]
    \centering
\includegraphics[width=0.8\linewidth]{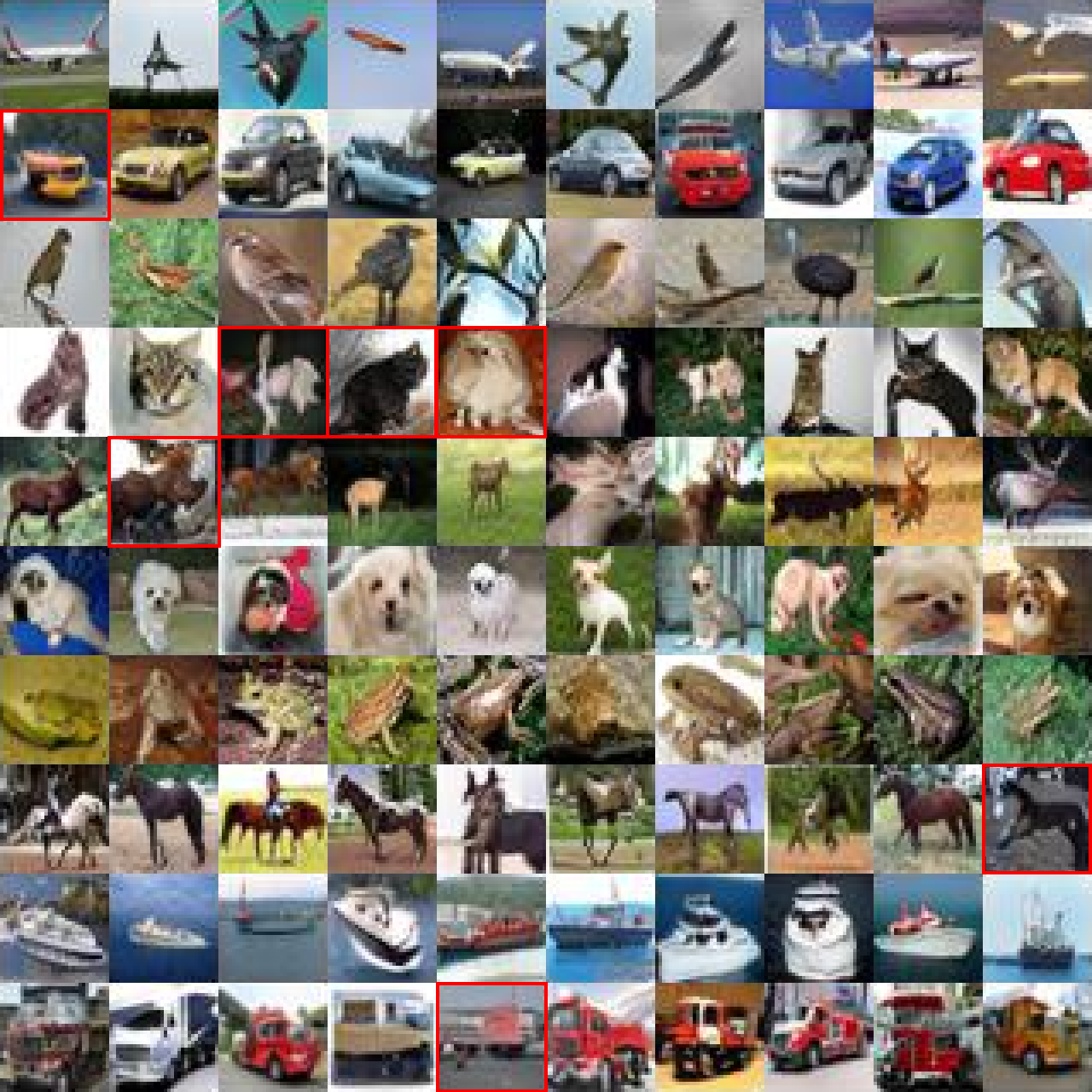}
    \caption{CIFAR-10 samples generated by the BigGAN backbone trained via Hinge + DiffAug + MC. Images in each row belong to one of the 10 classes. Images misclassified by ResNet-50 are in red boxes.}
    \label{fig:cifar10_biggan}
\end{figure}

\begin{table}[ht]
\centering
\resizebox{\columnwidth}{!}{%
\begin{tabularx}{0.95\textwidth}{l| Y | Y| Y||Y|Y|Y}
\toprule
    \multicolumn{1}{l|}{Loss} & \multicolumn{3}{c||}{Hinge} &   \multicolumn{3}{c}{BCE}   
\\\midrule
\multicolumn{1}{l|}{Metrics} & \multicolumn{1}{c|}{IS $\uparrow$} &   \multicolumn{1}{c|}{FID $\downarrow$} &
\multicolumn{1}{c||}{IFID $\downarrow$}&
\multicolumn{1}{c|}{IS $\uparrow$} &   \multicolumn{1}{c|}{FID $\downarrow$} & \multicolumn{1}{c}{IFID $\downarrow$}
\\\midrule
BigGAN  & 10.73 \textcolor{gray}{\scriptsize$\pm$0.10} & 8.31 & 83.36 &  10.81 \textcolor{gray}{\scriptsize$\pm$0.14}   & 8.37 & 81.89
\\\midrule
\quad+DiffAug   & 10.72 \textcolor{gray}{\scriptsize$\pm$0.13}  &  7.37 & 80.00 &  10.71 \textcolor{gray}{\scriptsize$\pm$0.08}   & 7.61 & 80.48 
\\
\quad+MC   & 11.39   \textcolor{gray}{\scriptsize $\pm$0.10}  &  6.97 & 80.20 &   11.59 \textcolor{gray}{\scriptsize$\pm$0.12 }  &  6.99 & 80.91
\\
\quad+MC+DiffAug  & \textbf{11.81} \textcolor{gray}{\scriptsize $\pm$ 0.06} & \textbf{5.77} & \textbf{76.26} & \textbf{11.90} \textcolor{gray}{\scriptsize $\pm$0.08} &  \textbf{5.85} & \textbf{77.33}
\\
 \bottomrule
\end{tabularx}}
\caption{Quantitative results of image generation on CIFAR-100 using BigGAN w/o and with our MC method and Differentiable Augmentation.}\label{table_CIFAR100_loss}
\end{table}

\subsubsection{Latent space analysis} The latent space learned by the generator
is expected to be continuous and smooth so that small perturbations on the
conditional input can lead to smooth and meaningful modifications on the
generated output. To explore the latent space, we interpolate between each pair
of randomly generated images by linearly interpolating
their conditional inputs. The results are shown in Figure \ref{fig:interp_two}.
Intermediary images between a pair of images from two different classes are
shown in each row with their confidence score distributions below. The labels of the two classes are shown on the left and right sides of each row, respectively. Each distribution of
the confidence scores is calculated by the bottleneck representation of the
ResNet-50 classifier with a softened softmax function of temperature 5.0 for normalization. The
score bars of the left class and the right class are shown in green and magenta,
respectively. The red boxes highlight the images being classified as a third class, while the yellow boxes highlight the images having non-monotonic transitions of their confidence scores
compared to those of their adjacent images. In other words, images in both red
and yellow boxes are undesirable as they imply that the latent space is less
continuous and less smooth. By comparing Figure \ref{fig:interp_two}a and \ref{fig:interp_two}b, we
can see that the MC method outperforms in the learned latent space and has most of the decision switch between the two classes occur in the middle range of the interpolation.

\begin{figure}[ ht]
    \centering
\includegraphics[width=1\linewidth]{Figures/CIFAR10/interpolation_compare4.png}
    \caption{Latent space interpolation based on cStyleGAN2 backbone trained via Hinge loss w/o and with our MC method. Red and yellow boxes highlight two types of undesirable transitions between generated images.}
    \label{fig:interp_two}
\end{figure}


\subsubsection{Quantitative results on CIFAR-100}
For completeness, we show the image generation performance on CIFAR-100 in Table \ref{table_CIFAR100_loss}. Significant improvements are achieved by using our MC
method independently for both baseline discriminative losses, with an average improvement
of 1.1 in IS, 1.6 in FID and 3.7 in IFID.
\subsubsection{Large-scale and high-resolution dataset results}
For the FFHQ256 (high resolution) dataset and the lmageNet64 (large-scale) dataset, we use the StyleGAN2-ada \cite{karras2020training} as backbones. With the proposed method, the improvements from the baseline 
are in Table \ref{table_largescale}.  MCGAN achieved significant and consistent gains on FID and IS, as evidenced by 16.4\% ($4.51 \rightarrow3.77$) and 15.5\% ($19.83 \rightarrow16.76$) FID improvement, respectively, on FFHQ256 and ImageNet64 datasets. These improvements are significant and consistent during training periods and across various datasets, demonstrating faster convergence and better generation ability. 
\begin{table}[ht]
\centering
\resizebox{\columnwidth}{!}{%
\begin{tabularx}{0.9\textwidth}{c| l| Y | Y|Y|Y}
\toprule
 \multicolumn{1}{c|}{Dataset}&\multicolumn{1}{l|}{Method} &   \multicolumn{1}{c|}{FID $\downarrow$} &
\multicolumn{1}{c|}{IS $\uparrow$}&
\multicolumn{1}{c|}{Precision$\uparrow$} &
\multicolumn{1}{c}{Recall $\uparrow$}
\\\midrule
 \multirow{2}{*}{FFHQ256} & original  & 4.51 \textcolor{gray}{\scriptsize $\pm$ 0.03}& 5.10 \textcolor{gray}{\scriptsize $\pm$ 0.07} & \textbf{0.69} & 0.40 \\
&\quad +MC   & \textbf{3.77} \textcolor{gray}{\scriptsize $\pm$ 0.04} & \textbf{5.25} \textcolor{gray}{\scriptsize $\pm$ 0.06} & 0.69 & \textbf{0.45}
 \\\midrule
 \multirow{2}{*}{ImageNet64} &original  & 19.83 \textcolor{gray}{\scriptsize $\pm$ 0.02} & 13.67 \textcolor{gray}{\scriptsize $\pm$ 0.17} &  \textbf{0.65}  & 0.33
\\
&\quad +MC  & \textbf{16.76} \textcolor{gray}{\scriptsize $\pm$ 0.08} &  \textbf{13.96} \textcolor{gray}{\scriptsize $\pm$ 0.22} & 0.63 &  \textbf{0.43}
 \\\bottomrule
\end{tabularx}}
\caption{Quantitative results of image generation on large-scale and high-resolution datasets using StyleGAN2-ada w/o and with our MC method; FID is 10-run average.}\label{table_largescale}
\end{table}

\subsection{Conditional video generation}
The conditional video generation task aims to generate the next frame given the past frames of the videos. Here, we used the Moving MNIST data set \cite{srivastava2015unsupervised}, which consists of 10,000 20-frame 64x64 videos of moving digits. The whole dataset is divided into the training set (9,000 samples) and the test set (1,000 samples). For the architecture of both the generator and discriminator, we use the convolutional LSTM (ConvLSTM) unit proposed by \cite{shi2015convolutional} due to its effectiveness in video prediction tasks.  In the model training, the generator takes in 5 past frames as the input and generates the corresponding 1-step future frame, then the real past frames and the generated future frames are concatenated along time dimension and put into the discriminator. 

For comparison, we used classical GAN as the benchmark. We trained our model for 20,000 epochs with batch size 16. The model performance is evaluated by computing the MSE between the generated frames and the corresponding ground truth on the test set. Numerical results show that our proposed MC method reduces GAN's MSE from 0.1012 to 0.0840. Compared to the baseline, the predicted frames from our MC method are clearer, more coherent, and visually closer to the ground truth, as shown in Figure \ref{fig:video}.

\begin{figure}[ht]
    \centering
\includegraphics[width=0.7\linewidth]{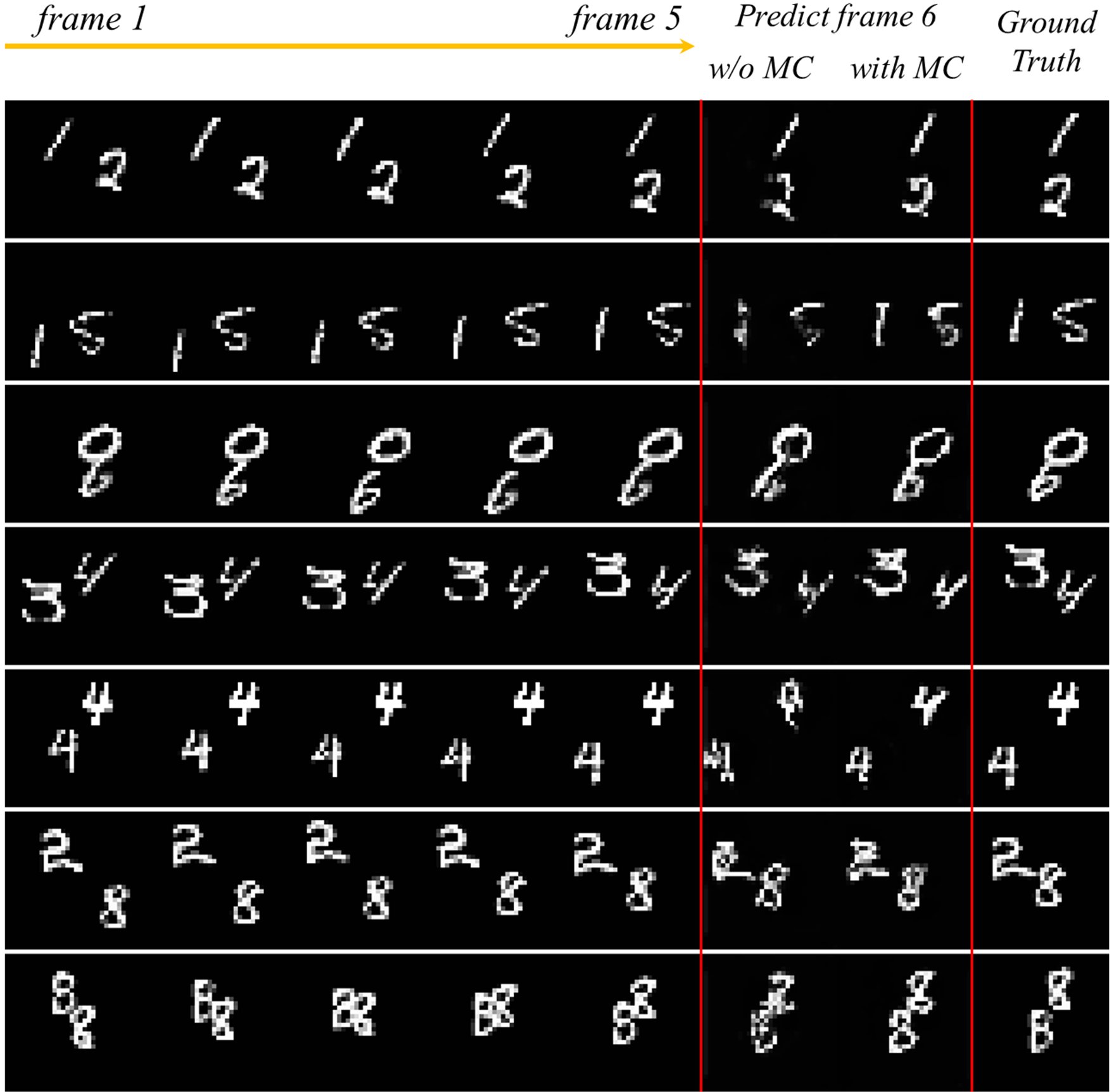}
    \caption{Results of predicting the next frame given the past 5 frames using ConvLSTM w/o and with our MC method.}
    \label{fig:video}
\end{figure}

\subsection{Conditional time-series generation}
Following \cite{liao2024sig}, we consider the conditional time-series generation task on two types of datasets (1) $d$-dimensional vector auto-regressive (VAR) data and (2) empirical stock data. The goal is to generate 3-step future paths based on the 3-lagged value of time series. We apply the MCGAN to the RCGAN baseline \cite{esteban2017real} and bencharmak it with  TimeGAN \cite{yoon2019time}, GMMN \cite{li2015generative} and SigWGAN \cite{liao2024sig} as the strong SOTA models for conditional time series generation. The model performance is evaluated using metrics in \cite{liao2024sig} including (1) ABS metric, (2) Correlation metric, (3)ACF metric and (4) $R^2$ error 
to assess the fitting of synthetic data in terms of marginal distribution, correlation, autocorrelation and usefulness, respectively.

 
\subsubsection{VAR dataset}
To validate the performance of MCGAN for multivariate time series systematically, we apply our method on VAR datasets with various path dimensions $d \in [1, 100]$ and different parameter settings. For $d \in \{1, 2, 3\}$, MCGAN consistently outperforms the RCGAN and TimeGAN (see Table \ref{table_Var_dim=1} -  \ref{table_Var_dim=3} in the Appendix). Figure \ref{fig:VARConDist} shows that the MCGAN and SigCWGAN have a better fitting than other baselines in terms of conditional law as the estimated mean (and standard deviation) is closer to that of the ground truth compared with the other baselines for $d=3$.  Note that SigCWGAN suffers the curse of dimensionality resulting from large $d$ and becomes infeasible for $d\geq 50$, whereas MCGAN does not. In fact, as $d$ increases, the performance gains of MCGAN become more pronounced, which is shown in Table \ref{table_VAR_loss}. For example, with $d = 100$, the MC method improves all the metrics by 30\%-40\%, further highlighting its effectiveness in high-dimensional settings.
\begin{table}[ht]
\centering
\resizebox{\columnwidth}{!}{%
\begin{tabularx}{0.8\textwidth}{c|l| Y | Y| Y||Y|Y|Y}
\toprule
    \multicolumn{2}{l|}{Loss} & \multicolumn{3}{c||}{Hinge} &   \multicolumn{3}{c}{BCE}   
\\\midrule
\multicolumn{1}{c|}{$d$} & \multicolumn{1}{l|}{Method} & \multicolumn{1}{c|}{ABS $\downarrow$} &   \multicolumn{1}{c|}{Corr $\downarrow$} &
\multicolumn{1}{c||}{ACF $\downarrow$}&
\multicolumn{1}{c|}{ABS $\downarrow$}&  \multicolumn{1}{c}{Corr $\downarrow$} &
\multicolumn{1}{c}{ACF $\downarrow$}
\\\midrule
\multirow{2}{*}{10} & RCGAN & 0.01802 & 0.05678 & 0.08175 & 0.01525 & 0.05074 & 0.08995 \\
 & \quad +MC & \textbf{0.01550} & \textbf{0.04361} & \textbf{0.06511} & \textbf{0.01393} & \textbf{0.04589} & \textbf{0.07193} \\\midrule
\multirow{2}{*}{50} & RCGAN & 0.03527 & 0.07000 & 0.08844 & 0.03632 & 0.07103 & 0.08765 \\
 & \quad +MC & \textbf{0.02861} & \textbf{0.06161} & \textbf{0.06869} & \textbf{0.02503} & \textbf{0.05995} & \textbf{0.06996} \\\midrule
\multirow{2}{*}{100} & RCGAN & 0.03319 & 0.07895 & 0.11018 & 0.03788 & 0.07295 & 0.10222 \\
 & \quad +MC & \textbf{0.02296} & \textbf{0.04981} & \textbf{0.06687} & \textbf{0.02344} & \textbf{0.05024} & \textbf{0.06143} 
 \\\bottomrule
\end{tabularx}}
\caption{Quantitative results of time-series generation on VAR data with different path dimensions $d$ ranging from $10$ to $100$ using RCGAN w/o and with our MC method.}\label{table_VAR_loss}
\end{table}


\begin{figure}[htb]
\centering 
\hspace*{-1cm}\includegraphics[width=1.2\linewidth]{Figures/Timeseries/MCGAN/VAR/cond_dist_compare.pdf}
\caption{Comparison of all models' performance in fitting the conditional distribution of future time series given one past path sample. The real and generated paths are plotted in red and blue, respectively, with the shaded area as the $95\%$ confidence interval. The training dataset is synthesized from VAR(1) model for $d=3$, $\phi = 0.8$ and $\sigma = 0.8$.}
\label{fig:VARConDist}
\end{figure}

\subsubsection{Stock dataset}
The stock dataset is a 4-dimensional time series composed of the log return and log volatility data of S$\&$P 500 and DJI spanning from 2005/01/01 to 2020/01/01. To cover the stylized facts of financial time series like leverage effect and volatility clustering, we also evaluate our generated samples using the ACF metric on the absolute return and squared return. Table \ref{table_stock_loss} demonstrates that our MC method consistently improves the generator performance in terms of temporal dependency, cross-correlation and usefulness. Although  RCGAN achieved comparable ABS metrics, it failed to capture the cross-correlation and temporal dependence. Specifically, using our proposed MC method, the correlation metric and ACF metric of RCGAN  can be improved from 0.25184 to 0.15687 and from 0.03814 to 0.02905. The gap in the $R^2$ further showcases that our MC method can enhance the generator to generate high-fidelity samples.

\begin{table}[ ht]
\centering
\resizebox{\columnwidth}{!}{%
\begin{tabularx}{0.8\textwidth}{c| Y | Y| Y| Y| Y| Y}
\toprule
 \multicolumn{1}{c|}{Model}& \multicolumn{1}{c|}{ABS $\downarrow$} &
\multicolumn{1}{c|}{ACF $\downarrow$}&
\multicolumn{1}{c|}{ACF($|x|$) $\downarrow$}&
\multicolumn{1}{c|}{ACF($x^2$) $\downarrow$}&
\multicolumn{1}{c|}{Corr $\downarrow$}&
\multicolumn{1}{c}{$R^2$ (\%) $\downarrow$} 
\\\midrule
RCGAN&\textbf{0.00868}&0.03814 &0.07874&0.13933&	0.25184&	4.49683\\
MCGAN (ours)&	0.00996	&\textbf{0.02905}&\textbf{0.05437}&0.09933&	0.15687&\textbf{2.84285}
\\\midrule
SigCWGAN &0.00960 &0.02982&0.13385&\textbf{0.08456}&	\textbf{0.11721}&	3.81981\\
GMMN&	0.01389&	0.05989&0.25295&0.26960&	0.31838&	11.87578\\
TimeGAN&	0.01100&	0.05716&0.06899&0.12584&	0.47344&	4.53960
 \\\bottomrule
\end{tabularx}}
\caption{Quantitative results of time-series generation on SPX/DJI data using RNN w/o and with our MC method.}\label{table_stock_loss}
\end{table}

\section{Conclusion}
This paper presents a general MCGAN method to tackle the training instability, a key bottleneck of GANs. Our method enhances generator training by introducing a novel regression loss for (conditional) GANs.  We establish the optimality and discriminability of MCGAN, and prove that the convergence of optimal generator can be achieved under a weaker condition of the discriminator due to the strong supervision of the regression loss. Moreover, extensive numerical results on various datasets, including image, time series data, and video data, are provided to validate the effectiveness and flexibility of our proposed MCGAN and consistent improvements over the benchmarking GAN models.  

For future work, it is worthwhile to explore the application of MCGAN to enhance state-of-the-art GAN models for more challenging and complex tasks, such as text-to-image generation. Moreover, given the flexibility and promising results of the MCGAN on different types of data, it can be effectively applied to generate multi-modality datasets simultaneously.




\newpage
\bibliography{aaai25}

\begin{thebibliography}{45}
\providecommand{\natexlab}[1]{#1}

\bibitem[{Alex(2009)}]{cifardataset}
Alex, K. 2009.
\newblock Learning multiple layers of features from tiny images.
\newblock \emph{https://www. cs. toronto. edu/kriz/learning-features-2009-TR. pdf}.

\bibitem[{Arjovsky and Bottou(2017)}]{arjovsky2017towards}
Arjovsky, M.; and Bottou, L. 2017.
\newblock Towards principled methods for training generative adversarial networks.
\newblock \emph{arXiv preprint arXiv:1701.04862}.

\bibitem[{Arjovsky, Chintala, and Bottou(2017)}]{arjovsky2017wasserstein}
Arjovsky, M.; Chintala, S.; and Bottou, L. 2017.
\newblock Wasserstein generative adversarial networks.
\newblock In \emph{International conference on machine learning}, 214--223. PMLR.

\bibitem[{Barratt and Sharma(2018)}]{barratt2018note}
Barratt, S.; and Sharma, R. 2018.
\newblock A note on the inception score.
\newblock \emph{arXiv preprint arXiv:1801.01973}.

\bibitem[{Brock, Donahue, and Simonyan(2018)}]{brock2018large}
Brock, A.; Donahue, J.; and Simonyan, K. 2018.
\newblock Large scale GAN training for high fidelity natural image synthesis.
\newblock \emph{arXiv preprint arXiv:1809.11096}.

\bibitem[{DeVries et~al.(2019)DeVries, Romero, Pineda, Taylor, and Drozdzal}]{devries2019evaluation}
DeVries, T.; Romero, A.; Pineda, L.; Taylor, G.~W.; and Drozdzal, M. 2019.
\newblock On the evaluation of conditional GANs.
\newblock \emph{arXiv preprint arXiv:1907.08175}.

\bibitem[{Esteban, Hyland, and R{\"a}tsch(2017)}]{esteban2017real}
Esteban, C.; Hyland, S.~L.; and R{\"a}tsch, G. 2017.
\newblock Real-valued (medical) time series generation with recurrent conditional gans.
\newblock \emph{arXiv preprint arXiv:1706.02633}.

\bibitem[{Goodfellow et~al.(2014)Goodfellow, Pouget-Abadie, Mirza, Xu, Warde-Farley, Ozair, Courville, and Bengio}]{goodfellow2014generative}
Goodfellow, I.; Pouget-Abadie, J.; Mirza, M.; Xu, B.; Warde-Farley, D.; Ozair, S.; Courville, A.; and Bengio, Y. 2014.
\newblock Generative adversarial nets.
\newblock \emph{Advances in neural information processing systems}, 27.

\bibitem[{Gulrajani et~al.(2017)Gulrajani, Ahmed, Arjovsky, Dumoulin, and Courville}]{gulrajani2017improved}
Gulrajani, I.; Ahmed, F.; Arjovsky, M.; Dumoulin, V.; and Courville, A.~C. 2017.
\newblock Improved training of wasserstein gans.
\newblock \emph{Advances in neural information processing systems}, 30.

\bibitem[{Gupta, Keshari, and Das(2022)}]{gupta2022rv}
Gupta, S.; Keshari, A.; and Das, S. 2022.
\newblock Rv-gan: Recurrent gan for unconditional video generation.
\newblock In \emph{Proceedings of the IEEE/CVF Conference on Computer Vision and Pattern Recognition}, 2024--2033.

\bibitem[{Han et~al.(2018)Han, Hayashi, Rundo, Araki, Shimoda, Muramatsu, Furukawa, Mauri, and Nakayama}]{han2018gan}
Han, C.; Hayashi, H.; Rundo, L.; Araki, R.; Shimoda, W.; Muramatsu, S.; Furukawa, Y.; Mauri, G.; and Nakayama, H. 2018.
\newblock GAN-based synthetic brain MR image generation.
\newblock In \emph{2018 IEEE 15th international symposium on biomedical imaging (ISBI 2018)}, 734--738. IEEE.

\bibitem[{He et~al.(2016)He, Zhang, Ren, and Sun}]{he2016deep}
He, K.; Zhang, X.; Ren, S.; and Sun, J. 2016.
\newblock Deep residual learning for image recognition.
\newblock In \emph{Proceedings of the IEEE conference on computer vision and pattern recognition}, 770--778.

\bibitem[{Heusel et~al.(2017)Heusel, Ramsauer, Unterthiner, Nessler, and Hochreiter}]{heusel2017gans}
Heusel, M.; Ramsauer, H.; Unterthiner, T.; Nessler, B.; and Hochreiter, S. 2017.
\newblock Gans trained by a two time-scale update rule converge to a local nash equilibrium.
\newblock \emph{Advances in neural information processing systems}, 30.

\bibitem[{Hou et~al.(2022)Hou, Cao, Shen, Pan, Li, and Cheng}]{hou2022conditional}
Hou, L.; Cao, Q.; Shen, H.; Pan, S.; Li, X.; and Cheng, X. 2022.
\newblock Conditional gans with auxiliary discriminative classifier.
\newblock In \emph{International Conference on Machine Learning}, 8888--8902. PMLR.

\bibitem[{Kang et~al.(2021)Kang, Shim, Cho, and Park}]{kang2021rebooting}
Kang, M.; Shim, W.; Cho, M.; and Park, J. 2021.
\newblock Rebooting acgan: Auxiliary classifier gans with stable training.
\newblock \emph{Advances in Neural Information Processing Systems}, 34: 23505--23518.

\bibitem[{Kang, Shin, and Park(2022)}]{kang2022studiogan}
Kang, M.; Shin, J.; and Park, J. 2022.
\newblock StudioGAN: A Taxonomy and Benchmark of GANs for Image Synthesis.
\newblock \emph{arXiv preprint arXiv:2206.09479}.

\bibitem[{Karras et~al.(2017)Karras, Aila, Laine, and Lehtinen}]{karras2017progressive}
Karras, T.; Aila, T.; Laine, S.; and Lehtinen, J. 2017.
\newblock Progressive growing of gans for improved quality, stability, and variation.
\newblock \emph{arXiv preprint arXiv:1710.10196}.

\bibitem[{Karras et~al.(2020{\natexlab{a}})Karras, Aittala, Hellsten, Laine, Lehtinen, and Aila}]{karras2020training}
Karras, T.; Aittala, M.; Hellsten, J.; Laine, S.; Lehtinen, J.; and Aila, T. 2020{\natexlab{a}}.
\newblock Training generative adversarial networks with limited data.
\newblock \emph{Advances in Neural Information Processing Systems}, 33: 12104--12114.

\bibitem[{Karras et~al.(2020{\natexlab{b}})Karras, Laine, Aittala, Hellsten, Lehtinen, and Aila}]{karras2020analyzing}
Karras, T.; Laine, S.; Aittala, M.; Hellsten, J.; Lehtinen, J.; and Aila, T. 2020{\natexlab{b}}.
\newblock Analyzing and improving the image quality of stylegan.
\newblock In \emph{Proceedings of the IEEE/CVF conference on computer vision and pattern recognition}, 8110--8119.

\bibitem[{Li, Swersky, and Zemel(2015)}]{li2015generative}
Li, Y.; Swersky, K.; and Zemel, R. 2015.
\newblock Generative moment matching networks.
\newblock In \emph{International conference on machine learning}, 1718--1727. PMLR.

\bibitem[{Liao et~al.(2024)Liao, Ni, Sabate-Vidales, Szpruch, Wiese, and Xiao}]{liao2024sig}
Liao, S.; Ni, H.; Sabate-Vidales, M.; Szpruch, L.; Wiese, M.; and Xiao, B. 2024.
\newblock Sig-Wasserstein GANs for conditional time series generation.
\newblock \emph{Mathematical Finance}, 34(2): 622--670.

\bibitem[{Lim and Ye(2017)}]{lim2017geometric}
Lim, J.~H.; and Ye, J.~C. 2017.
\newblock Geometric gan.
\newblock \emph{arXiv preprint arXiv:1705.02894}.

\bibitem[{Liu, Bousquet, and Chaudhuri(2017)}]{liu2017approximation}
Liu, S.; Bousquet, O.; and Chaudhuri, K. 2017.
\newblock Approximation and convergence properties of generative adversarial learning.
\newblock \emph{Advances in Neural Information Processing Systems}, 30.

\bibitem[{Maas et~al.(2013)Maas, Hannun, Ng et~al.}]{maas2013rectifier}
Maas, A.~L.; Hannun, A.~Y.; Ng, A.~Y.; et~al. 2013.
\newblock Rectifier nonlinearities improve neural network acoustic models.
\newblock In \emph{Proc. icml}, 1, 3. Atlanta, GA.

\bibitem[{Mao et~al.(2017)Mao, Li, Xie, Lau, Wang, and Paul~Smolley}]{mao2017least}
Mao, X.; Li, Q.; Xie, H.; Lau, R.~Y.; Wang, Z.; and Paul~Smolley, S. 2017.
\newblock Least squares generative adversarial networks.
\newblock In \emph{Proceedings of the IEEE international conference on computer vision}, 2794--2802.

\bibitem[{Mescheder, Geiger, and Nowozin(2018)}]{mescheder2018training}
Mescheder, L.; Geiger, A.; and Nowozin, S. 2018.
\newblock Which training methods for GANs do actually converge?
\newblock In \emph{International conference on machine learning}, 3481--3490. PMLR.

\bibitem[{Mirza and Osindero(2014)}]{mirza2014conditional}
Mirza, M.; and Osindero, S. 2014.
\newblock Conditional generative adversarial nets.
\newblock \emph{arXiv preprint arXiv:1411.1784}.

\bibitem[{Miyato et~al.(2018)Miyato, Kataoka, Koyama, and Yoshida}]{miyato2018spectral}
Miyato, T.; Kataoka, T.; Koyama, M.; and Yoshida, Y. 2018.
\newblock Spectral normalization for generative adversarial networks.
\newblock \emph{arXiv preprint arXiv:1802.05957}.

\bibitem[{Miyato and Koyama(2018)}]{miyato2018cgans}
Miyato, T.; and Koyama, M. 2018.
\newblock cGANs with projection discriminator.
\newblock \emph{arXiv preprint arXiv:1802.05637}.

\bibitem[{Nguyen, Wainwright, and Jordan(2010)}]{nguyen2010estimating}
Nguyen, X.; Wainwright, M.~J.; and Jordan, M.~I. 2010.
\newblock Estimating divergence functionals and the likelihood ratio by convex risk minimization.
\newblock \emph{IEEE Transactions on Information Theory}, 56(11): 5847--5861.

\bibitem[{Ni et~al.(2021)Ni, Szpruch, Sabate-Vidales, Xiao, Wiese, and Liao}]{ni2021sig}
Ni, H.; Szpruch, L.; Sabate-Vidales, M.; Xiao, B.; Wiese, M.; and Liao, S. 2021.
\newblock Sig-Wasserstein GANs for time series generation.
\newblock In \emph{Proceedings of the Second ACM International Conference on AI in Finance}, 1--8.

\bibitem[{Nowozin, Cseke, and Tomioka(2016)}]{nowozin2016f}
Nowozin, S.; Cseke, B.; and Tomioka, R. 2016.
\newblock f-gan: Training generative neural samplers using variational divergence minimization.
\newblock \emph{Advances in neural information processing systems}, 29.

\bibitem[{Odena, Olah, and Shlens(2017)}]{odena2017conditional}
Odena, A.; Olah, C.; and Shlens, J. 2017.
\newblock Conditional image synthesis with auxiliary classifier gans.
\newblock In \emph{International conference on machine learning}, 2642--2651. PMLR.

\bibitem[{Salimans et~al.(2016)Salimans, Goodfellow, Zaremba, Cheung, Radford, and Chen}]{salimans2016improved}
Salimans, T.; Goodfellow, I.; Zaremba, W.; Cheung, V.; Radford, A.; and Chen, X. 2016.
\newblock Improved techniques for training gans.
\newblock \emph{Advances in neural information processing systems}, 29.

\bibitem[{Shi et~al.(2015)Shi, Chen, Wang, Yeung, Wong, and Woo}]{shi2015convolutional}
Shi, X.; Chen, Z.; Wang, H.; Yeung, D.-Y.; Wong, W.-K.; and Woo, W.-c. 2015.
\newblock Convolutional LSTM network: A machine learning approach for precipitation nowcasting.
\newblock \emph{Advances in neural information processing systems}, 28.

\bibitem[{Srivastava, Mansimov, and Salakhudinov(2015)}]{srivastava2015unsupervised}
Srivastava, N.; Mansimov, E.; and Salakhudinov, R. 2015.
\newblock Unsupervised learning of video representations using lstms.
\newblock In \emph{International conference on machine learning}, 843--852. PMLR.

\bibitem[{Tseng et~al.(2021)Tseng, Jiang, Liu, Yang, and Yang}]{tseng2021regularizing}
Tseng, H.-Y.; Jiang, L.; Liu, C.; Yang, M.-H.; and Yang, W. 2021.
\newblock Regularizing generative adversarial networks under limited data.
\newblock In \emph{Proceedings of the IEEE/CVF Conference on Computer Vision and Pattern Recognition}, 7921--7931.

\bibitem[{Xiao(2023)}]{xiao2023signature}
Xiao, B. 2023.
\newblock \emph{The Signature-Wasserstein GAN for Time Series Generation and Beyond}.
\newblock Ph.D. thesis, UCL (University College London).

\bibitem[{Xu et~al.(2019)Xu, Skoularidou, Cuesta-Infante, and Veeramachaneni}]{xu2019modeling}
Xu, L.; Skoularidou, M.; Cuesta-Infante, A.; and Veeramachaneni, K. 2019.
\newblock Modeling tabular data using conditional gan.
\newblock \emph{Advances in neural information processing systems}, 32.

\bibitem[{Xu et~al.(2020)Xu, Wenliang, Munn, and Acciaio}]{xu2020cot}
Xu, T.; Wenliang, L.~K.; Munn, M.; and Acciaio, B. 2020.
\newblock Cot-gan: Generating sequential data via causal optimal transport.
\newblock \emph{Advances in neural information processing systems}, 33: 8798--8809.

\bibitem[{Yoon, Jarrett, and Van~der Schaar(2019)}]{yoon2019time}
Yoon, J.; Jarrett, D.; and Van~der Schaar, M. 2019.
\newblock Time-series generative adversarial networks.
\newblock \emph{Advances in neural information processing systems}, 32.

\bibitem[{Zhang et~al.(2019)Zhang, Zhang, Odena, and Lee}]{zhang2019consistency}
Zhang, H.; Zhang, Z.; Odena, A.; and Lee, H. 2019.
\newblock Consistency regularization for generative adversarial networks.
\newblock \emph{arXiv preprint arXiv:1910.12027}.

\bibitem[{Zhao(2016)}]{zhao2016energy}
Zhao, J. 2016.
\newblock Energy-based Generative Adversarial Network.
\newblock \emph{arXiv preprint arXiv:1609.03126}.

\bibitem[{Zhao et~al.(2020)Zhao, Liu, Lin, Zhu, and Han}]{zhao2020differentiable}
Zhao, S.; Liu, Z.; Lin, J.; Zhu, J.-Y.; and Han, S. 2020.
\newblock Differentiable augmentation for data-efficient gan training.
\newblock \emph{Advances in Neural Information Processing Systems}, 33: 7559--7570.

\bibitem[{Zhou et~al.(2021)Zhou, Xie, Ni, Geng, and Tian}]{zhou2021omni}
Zhou, P.; Xie, L.; Ni, B.; Geng, C.; and Tian, Q. 2021.
\newblock Omni-gan: On the secrets of cgans and beyond.
\newblock In \emph{Proceedings of the IEEE/CVF International Conference on Computer Vision}, 14061--14071.

\end{thebibliography}

\newpage

\section{Reproducibility Checklist}
This paper:
\begin{itemize}
    \item Includes a conceptual outline and/or pseudocode description of AI methods introduced ({\color{red}yes})
    \item Clearly delineates statements that are opinions, hypothesis, and speculation from objective facts and results ({\color{red}yes})
    \item Provides well marked pedagogical references for less-familiare readers to gain background necessary to replicate the paper ({\color{red}yes})
\end{itemize}
Does this paper make theoretical contributions? ({\color{red}yes})

If yes, please complete the list below.
\begin{itemize}
\item All assumptions and restrictions are stated clearly and formally. ({\color{red}yes})
\item All novel claims are stated formally (e.g., in theorem statements). ({\color{red}yes})
 \item Proofs of all novel claims are included. ({\color{red}yes})
\item Proof sketches or intuitions are given for complex and/or novel results. ({\color{red}yes})
\item Appropriate citations to theoretical tools used are given. ({\color{red}yes})
 \item All theoretical claims are demonstrated empirically to hold. ({\color{red}yes})
\end{itemize}
Does this paper rely on one or more datasets? ({\color{red}yes})

If yes, please complete the list below.
\begin{itemize}
    \item A motivation is given for why the experiments are conducted on the selected datasets ({\color{red}yes})
\item  All novel datasets introduced in this paper are included in a data appendix. ({\color{red}NA})
\item  All novel datasets introduced in this paper will be made publicly available upon publication of the paper with a license that allows free usage for research purposes. ({\color{red}NA})
\item  All datasets drawn from the existing literature (potentially including authors’ own previously published work) are accompanied by appropriate citations. ({\color{red}yes})
\item  All datasets drawn from the existing literature (potentially including authors’ own previously published work) are publicly available. ({\color{red}yes})
\item  All datasets that are not publicly available are described in detail, with explanation why publicly available alternatives are not scientifically satisficing. ({\color{red}yes})
\end{itemize}
Does this paper include computational experiments? ({\color{red}yes})

If yes, please complete the list below.
\begin{itemize}

    \item Any code required for pre-processing data is included in the appendix. ({\color{red}yes})
    \item All source code required for conducting and analyzing the experiments is included in a code appendix. ({\color{red}yes})
    \item All source code required for conducting and analyzing the experiments will be made publicly available upon publication of the paper with a license that allows free usage for research purposes. ({\color{red}yes})
    \item All source code implementing new methods have comments detailing the implementation, with references to the paper where each step comes from ({\color{red}yes})
    \item If an algorithm depends on randomness, then the method used for setting seeds is described in a way sufficient to allow replication of results. ({\color{red}yes})
    \item This paper specifies the computing infrastructure used for running experiments (hardware and software), including GPU/CPU models; amount of memory; operating system; names and versions of relevant software libraries and frameworks. ({\color{red}yes} )
    \item This paper formally describes evaluation metrics used and explains the motivation for choosing these metrics.({\color{red}yes})
\item This paper states the number of algorithm runs used to compute each reported result. ({\color{red}yes})
\item Analysis of experiments goes beyond single-dimensional summaries of performance (e.g., average; median) to include measures of variation, confidence, or other distributional information. ({\color{red}yes })

\item The significance of any improvement or decrease in performance is judged using appropriate statistical tests (e.g., Wilcoxon signed-rank). ({\color{red}yes})

\item This paper lists all final (hyper-)parameters used for each model/algorithm in the paper’s experiments. ({\color{red}yes})

\item This paper states the number and range of values tried per (hyper-) parameter during development of the paper, along with the criterion used for selecting the final parameter setting. ({\color{red}yes})
\end{itemize}

\clearpage
\appendix
\section{Algorithm}\label{appendix:magan_algo}
The pseudocode of the MCGAN is provided in Algorithm \ref{algo:mcgan}.
\begin{algorithm}[!ht]
\caption{Algorithm of MCGAN}\label{algo:mcgan}
\begin{algorithmic}[1]
    \State \textbf{Input}: 
    \Statex \hspace{\algorithmicindent} $N$: number of epochs;
    \Statex \hspace{\algorithmicindent} $N_D$: number of discriminator iterations per generator iteration;
    \Statex \hspace{\algorithmicindent} $B \in \mathbb{N}$: batch size;
    \Statex \hspace{\algorithmicindent} $N_{MC}$: number of Monte Carlo samples;
    \Statex \hspace{\algorithmicindent} $f_1,f_2$: specified functions used to compute discriminative loss;
    \State \textbf{Output}: 
    \Statex \hspace{\algorithmicindent} $(\theta^*, \psi^*)$: approximation of the optimal parameters of the generator and discriminator.
    \State Initialize model parameters $(\theta, \psi)$ for generator $G$ and discriminator $D$.
    \For{$n = 1$ \textbf{to} $N$}
        \For{$n_d = 1$ \textbf{to} $N_D$}
            \State Sample batch $\{(x_i, y_i)\}_{i=1}^{B} \sim p_d(X, Y)$
            \State Generate samples $\{(\hat{x}_i, y_i)\}_{i=1}^{B} \sim p_{\theta}(X, Y)$
            \State Compute discriminative loss:
            \[
            \begin{aligned}
            \mathcal{L}_D(\psi; \mu, \nu_{\theta}) \gets &\frac{1}{B} \sum_{i=1}^{B} f_1(D^{\psi}(x_i, y_i)) \\&+ \frac{1}{B} \sum_{i=1}^{B} f_2(D^{\psi}(\hat{x}_i, y_i))
            \end{aligned}
            \]
            \State Update discriminator parameters:
            \[
            \psi \gets \text{Adam}(\mathcal{L}_D)
            \]
        \EndFor
        \State Sample batch $\{(x_i, y_i)\}_{i=1}^{B} \sim p_d(X, Y)$;
        \State For each label $y_i$, estimate the conditional expectation:
        \[
        \hat{\mathbb{E}}_{p_\theta}[D^{\psi}(X, y_i) \mid y_i] \gets
        \frac{1}{N_{MC}} \sum_{j=1}^{N_{MC}} D^{\psi}(G^{\theta}(y_i, z^{(j)}), y_i);
        \]
        \State Compute generative loss:
        \[
        \begin{aligned}
        \mathcal{L}_G \gets &\frac{1}{B} \sum_{i=1}^{B} \|D^{\psi}(x_i, y_i) - \hat{\mathbb{E}}_{p_\theta}[D^{\psi}(X, y_i) \mid y_i]\|^2;
        \end{aligned}
        \]
        \State Update generator parameters:
        \[
        \theta \gets \text{Adam}(\mathcal{L}_G);
        \]
    \EndFor
\end{algorithmic}
\end{algorithm}

\newpage
\section{Properties of MCGAN}
In this section, we explore some favorable properties of MCGAN to further support why it can achieve such advantageous numerical results.

\subsection{Relation to $f$-divergence}
In the case of vanilla GAN \cite{goodfellow2014generative},
 the optimal discriminator is given by
 \begin{equation}\label{eq:true_D}
D^{\hat{\phi}_{\mu,\nu_{\theta}}^*}(x)=\frac{p_{\mu}(x)}{p_{\mu}(x)+p_{\nu_{ \theta}}(x)}.
 \end{equation}
 Given \eqref{eq:true_D}, the vanilla GAN objective can be interpreted as minimizing \emph{Jensen-Shannon divergence} between $\mu$ and $\nu_{\theta}$, subtracting a constant term $\log(4)$. The generator is therefore 
trained to minimize the Jensen-Shannon divergence. Similarly, \cite{mao2017least} also showed that optimizing LSGANs yields minimizing the Pearson $\chi^2$ divergence between real and fake measures. 

Given \eqref{eq:true_D}, we are also able to explore the connection between MCGAN and $f$-divergence. As proven in Lemma \ref{theorem: f-div}, when considering the optimal discriminator \eqref{eq:true_D}, the difference between real and fake expected discriminator output in \eqref{eq:gradientMCGAN} can be interpreted as a $f$-divergence between $\mu$ and $\bar\nu$, where $\bar\nu:=\frac{(\mu+\nu)}{2}$ represents the averaged measure with density $p_{\bar\nu}(x):=\frac{p_{\mu}(x)+p_{\nu_{\theta}}(x)}{2}$. 

\begin{restatable}{lem}{fdiv}\label{theorem: f-div}
Given the optimal discriminator in \eqref{eq:true_D}, optimizing the MCGAN objective \eqref{eq:unobj} is equivalent to minimizing the square of $f$-divergence:
\begin{equation}
\nabla_{\theta}\mathcal{L}_G(\theta;\phi^*)=\nabla_{\theta}[\emph{Div}_{f}(\mu|\bar\nu)]^2,
\end{equation}
where $f(x)=x(x-1)$.
\end{restatable}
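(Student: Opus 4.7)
The plan is to first establish the identity $\mathrm{Div}_f(\mu|\bar\nu) = \mathbb{E}_\mu[D^{\phi^*}] - \mathbb{E}_{\nu_\theta}[D^{\phi^*}]$ at the optimal discriminator, and then leverage the expansion of $\mathcal{L}_R$ into a variance plus a squared mean discrepancy to line up the two gradients.

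First, I would substitute the optimal discriminator $D^{\phi^*}(x) = p_\mu(x)/(p_\mu(x)+p_{\nu_\theta}(x))$ from \eqref{eq:true_D} into $A := \mathbb{E}_\mu[D^{\phi^*}]$ and $B := \mathbb{E}_{\nu_\theta}[D^{\phi^*}]$. Since $p_\mu$ and $p_{\nu_\theta}$ are probability densities, summing the two integrals gives $A+B=1$; combining with $A = \int p_\mu^2/(p_\mu+p_{\nu_\theta})\,dx$ yields
\[
A - B \;=\; 2A - 1 \;=\; 2\int \frac{p_\mu^2}{p_\mu+p_{\nu_\theta}}\,dx - 1.
\]

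Next, I would compute $\mathrm{Div}_f(\mu|\bar\nu)$ directly from its definition with $f(x)=x(x-1)$ and $p_{\bar\nu}=(p_\mu+p_{\nu_\theta})/2$. The ratio $p_\mu/p_{\bar\nu}$ simplifies to $2p_\mu/(p_\mu+p_{\nu_\theta})$, so after expanding $f$, multiplying by $p_{\bar\nu}$, and using $\int p_\mu\,dx = 1$, the $f$-divergence collapses to the same expression $2\int p_\mu^2/(p_\mu+p_{\nu_\theta})\,dx - 1$. This establishes $\mathrm{Div}_f(\mu|\bar\nu) = A - B$.

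The last step is to expand the square in \eqref{eq:unobj} to obtain $\mathcal{L}_R(\theta;\phi) = \mathrm{Var}_\mu[D^\phi] + (A-B)^2$. Under the gradient convention of \eqref{eq:gradientMCGAN} (the discriminator is frozen during the generator update), the variance term is $\theta$-independent, and the chain rule gives $\nabla_\theta \mathcal{L}_R(\theta;\phi^*) = -2(A-B)\nabla_\theta B = -2(A-B)H(\theta,\phi^*)$. Applying the same differentiation to $(A-B)^2 = [\mathrm{Div}_f(\mu|\bar\nu)]^2$ yields the identical $-2(A-B)H(\theta,\phi^*)$, so the two gradients coincide.

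The main obstacle is the interpretation of $\nabla_\theta \mathrm{Div}_f(\mu|\bar\nu)$: if one differentiates the density-based expression directly through $p_{\nu_\theta}$ inside the non-linear $f$-divergence integrand, an extra factor of $D^{\phi^*}$ emerges that is absent from $\nabla_\theta \mathcal{L}_R$ computed via reparameterization, because $\phi^*$ itself depends on $\theta$ through $\nu_\theta$. The equality should therefore be read under the same convention as \eqref{eq:gradientMCGAN}, in which $\phi^*$ is held at its current optimum and the implicit dependence of $\phi^*$ on $\theta$ is not differentiated; under this convention the envelope-type discrepancy vanishes and both sides agree.
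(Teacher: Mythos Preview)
Your proposal is correct and follows essentially the same approach as the paper: both arguments establish the identity $\mathbb{E}_{\mu}[D^{\phi^*}]-\mathbb{E}_{\nu_\theta}[D^{\phi^*}]=\mathrm{Div}_f(\mu\|\bar\nu)$ and then use the variance decomposition $\mathcal{L}_R=\mathrm{Var}_\mu[D^{\phi^*}]+(A-B)^2$ (implicit in the paper's step $\nabla_\theta\mathbb{E}_\mu[(D^{\phi^*}-B)^2]=\nabla_\theta(A-B)^2$) to conclude. The only differences are cosmetic: the paper rewrites $A-B$ as a single integral $\int\frac{p_\mu}{p_\mu+p_{\nu_\theta}}(p_\mu-p_{\nu_\theta})\,dx$ and massages it directly into the $f$-divergence form, whereas you compute $A-B=2A-1$ via $A+B=1$ and match it to an independent computation of $\mathrm{Div}_f$; and your final paragraph on the frozen-$\phi^*$ convention makes explicit a subtlety the paper leaves unstated.
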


Lemma \ref{theorem: f-div} establishes a connection between MCGAN and $f$-divergence, illustrating the information-theoretic aspects of the MCGAN framework. Unlike the $\text{KL}(\nu|\mu)$ induced in non-saturating loss \cite{arjovsky2017towards}, this $\text{Div}_{f}(\mu|\bar\nu)$ can avoid mode dropping by assigning moderate cost on the occasions where $p_{\mu}(x)\gg p_{\nu_{\theta}}(x)$.  

\subsection{Improved stability}
Lack of stability is a well-known issue in GAN training, and it arises due to several factors. \cite{arjovsky2017towards} provides insights into this instability issue of non-saturating loss. The instability is analyzed by modeling the inaccurate discriminator as an optimal discriminator perturbed by a centered Gaussian process. Given this noisy version of the optimal discriminator, it can be shown that the gradient of non-saturating loss follows a centered Cauchy distribution with infinite mean and variance, which leads to massive and unpredictable updates of the generator parameter. Hence it can be regarded as the source of training instability.

In contrast, we prove that the proposed regression loss function in MCGAN can overcome the instability issue. Moreover, we relax the condition of the noise vector from the independent Gaussian distribution to a more general distribution.  

\begin{restatable}{thm}{stability}\label{theorem:stability}
Let $D^{\phi_{\epsilon}}(x)$ be a noisy version of optimal discriminator such that $D^{\phi_{\epsilon}}(x)=D^{\phi^*} (x)+\epsilon_1(x)$ and $\nabla_x D^{\phi_{\epsilon}}(x)=\nabla_x D^{\phi^*}(x) +\epsilon_2(x)$ for $ \forall x\in \mathcal{X},$ where $\epsilon_1(x)$ and $\epsilon_2(x)$ are two uncorrelated and centered random noises that are indexed by $x$ and have finite variance.\footnote{ This assumption of centered random noise is made due to the fact that as the approximation gets better, this
error looks more and more like centered random noise due to the finite precision \cite{arjovsky2017towards}.} Then for $\mathcal{L}_G(\theta;\phi)$ in \eqref{eq:unobj}, we have 
\begin{equation}
    \mathbb{E}[\nabla_{\theta}\mathcal{L}_G(\theta;\phi_\epsilon)]=\nabla_{\theta}\mathcal{L}_G(\theta;\phi^*),
\end{equation} and the variance of $\nabla_{\theta}\mathcal{L}_G(\theta;\phi_\epsilon)$ is finite and depends on the difference between $\mu$ and $\nu_{\theta}.$ Specifically, when $\nu_{\theta^*}=\mu$, we have $\nabla_{\theta}\mathcal{L}_G({\theta^*};\phi_\epsilon)=0$ almost surely.
\end{restatable}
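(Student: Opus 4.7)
The plan is to exploit the factored form of the MCGAN gradient from Eqn. \eqref{eq:gradientMCGAN}, namely
$$\nabla_\theta \mathcal{L}_G(\theta;\phi) = 2\, d_\phi(\mu,\nu_\theta)\, H(\theta,\phi),$$
and substitute the noisy discriminator $\phi_\epsilon$. Writing $d^\ast := d_{\phi^*}(\mu,\nu_\theta)$ and $H^\ast := H(\theta,\phi^*)$, the noise assumptions yield the clean decomposition
$$\nabla_\theta \mathcal{L}_G(\theta;\phi_\epsilon) \;=\; 2\,(d^\ast + A)\,(H^\ast + B),$$
where
$$A := \mathbb{E}_\mu[\epsilon_1(X)] - \mathbb{E}_{\nu_\theta}[\epsilon_1(X)], \qquad B := \mathbb{E}_{z\sim p(z)}\bigl[\nabla_\theta G^\theta(z)^T \epsilon_2(G^\theta(z))\bigr].$$
All three conclusions of the theorem fall out of this product structure.

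For the unbiasedness claim, interchange expectation with the defining integrals for $A$ and $B$ via Fubini. Centredness of $\epsilon_1(x),\epsilon_2(x)$ at every $x$ yields $\mathbb{E}[A]=\mathbb{E}[B]=0$, and uncorrelatedness of $\epsilon_1(x)$ with $\epsilon_2(y)$ for all $x,y$ yields $\mathbb{E}[AB^T]=0$. Expanding $(d^\ast+A)(H^\ast+B)$ and taking expectation, the three noise-dependent summands collapse, leaving $\mathbb{E}[\nabla_\theta \mathcal{L}_G(\theta;\phi_\epsilon)] = 2 d^\ast H^\ast = \nabla_\theta \mathcal{L}_G(\theta;\phi^*)$.

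For the variance claim, expand $(d^\ast+A)^2(H^\ast+B)(H^\ast+B)^T$ and compute entrywise expectations. All odd-moment cross terms vanish by centredness and uncorrelatedness, and the even-moment terms are finite because they reduce to at most fourth moments of $\epsilon_1,\epsilon_2$, which are finite by hypothesis together with Cauchy--Schwarz. The resulting covariance matrix has entries scaling with $(d^\ast)^2$ and $\mathrm{Var}(A)$, which is the claimed dependence on the discrepancy between $\mu$ and $\nu_\theta$. For the equilibrium claim, observe that if $\nu_{\theta^*}=\mu$ then $d^\ast=0$ and moreover $A = \mathbb{E}_\mu[\epsilon_1(X)]-\mathbb{E}_{\nu_{\theta^*}}[\epsilon_1(X)] \equiv 0$ pathwise, since the two expectations are literally the same integral for every realization of $\epsilon_1$. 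Hence the scalar factor $(d^\ast+A)$ vanishes almost surely, and the product structure forces $\nabla_\theta \mathcal{L}_G(\theta^*;\phi_\epsilon)=0$ almost surely.

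The main obstacle is controlling the mixed-moment terms such as $\mathbb{E}[A^2\, BB^T]$ under only uncorrelatedness (rather than full joint independence) of $\epsilon_1$ and $\epsilon_2$. The cleanest fix is either to strengthen the assumption to joint independence (matching the Gaussian-process model of \cite{arjovsky2017towards} that motivates the theorem) or to posit an explicit finite fourth moment and split the expectations via Cauchy--Schwarz. Once that is handled, the rest is routine bookkeeping, and the conceptual payoff comes entirely from the factorized form of $\nabla_\theta \mathcal{L}_G$: the extra scalar factor $(d^\ast+A)$, which is absent from the classical generator loss analysed in \cite{arjovsky2017towards}, is precisely what delivers both the finite variance and the sharp almost-sure vanishing at equilibrium.
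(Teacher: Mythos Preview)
Your proposal is correct and follows essentially the same route as the paper: both factor the noisy gradient as a product $(d^\ast + A)(H^\ast + B)$, expand, and kill the cross terms using centredness and uncorrelatedness. If anything you are more careful than the paper, which neither spells out the pathwise argument for the almost-sure vanishing at $\nu_{\theta^*}=\mu$ nor confronts the fourth-moment issue you flag for the variance bound; the paper simply asserts finiteness and dependence on $\mu-\nu_\theta$ without addressing $\mathbb{E}[A^2 BB^T]$.
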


Theorem \ref{theorem:stability} implies that given an inaccurate discriminator, the expected value of the gradient of $\mathcal{L}_G(\theta;\phi)$ in \eqref{eq:unobj} is the accurate gradient given by the optimal discriminator and its variance is finite. More importantly, its variance is determined by the discrepancy between $\mu$ and $\nu_{\theta}$, specifically when the fake measure produces the real measure, the gradient is zero almost surely, indicating improved training stability.

\subsection{Relation to feature matching}
In order to enhance the generative performance, a feature matching approach is proposed in \cite{salimans2016improved} which adds to the generative loss function an additional cost that matches the statistic of the real and generated samples given by the activation on an intermediate layer of the discriminator. The generator hence is trained to generate fake samples that reflect the statistics (features maps) of real data rather than just maximizing its discriminator outputs. 

To be specific, suppose we have a feature map $\psi$ that maps each $x\in\mathcal{X}$ to  a feature vector $\psi(x)=(\psi_1(x),\psi_2(x),\ldots,\psi_n(x))\in\mathbb{R}^n$ where each $\psi_i\in C_{b}(\mathcal{X})$, then the feature matching approach adds to the generative loss function an additional cost defined as:
\begin{equation}\label{eq:fm}
    R_{\text{fm}}(\theta;{\psi})=\|\mathbb{E}_{\mu}[\psi(x)]-\mathbb{E}_{\nu_{\theta}}[\psi(x)]\|^2_{2}.
\end{equation}

Although empirical results indicate that feature matching is effective, it lacks a theoretical guarantee that minimizing the difference of features can help us reach the Nash equilibrium or optimality $\nu_{\theta}=\mu$. Hence \eqref{eq:fm} is commonly used as a regularization term rather than an individual loss function like the one proposed in our MCGAN.

In the case of MCGAN, if we construct the discriminator as a linear transformation of the feature map, i.e. $D^{\phi}=\phi^T(\psi(x),\mathbf{1})$ where $\phi\in\mathbb{R}^{n+1}$ is a linear functional. Given the novel generative loss function in \eqref{eq:unobj}, we have
\[\nabla_{\theta} \mathcal{L}_{G}(\theta;\phi)=\nabla_{\theta} 
|\mathbb{E}_{\mu}[\phi^T(\psi(x),\mathbf{1})]-\mathbb{E}_{\nu_{\theta}}[\phi^T(\psi(x),\mathbf{1})]|^2.
\]
Here, the generator is also trained to match the feature maps of real samples and fake samples, but in a weighted average way.  By using the linear transformation on the feature map, the discriminator is trained to focus on the most relevant features and assign them larger weights while assigning relatively smaller weights to less important features. As a result, the generator is trained to match the feature maps more efficiently.

\newpage

\section{Proofs}
\subsection{Proof of optimality in Theorem \ref{theorem:g}}
\begin{proof}
For every $\phi\in\Phi$, the derivative of $L_{G}(\theta;\phi,\mu)$ in \eqref{eq:unobj} w.r.t $\theta$ can be derived as 
$$
\begin{aligned}
\nabla_{\theta}L_{G}(\theta;\phi,\mu)&=\mathbb{E}_{\mu}[2(D^{\phi} (x)-\mathbb{E}_{\nu_\theta}[D^{\phi} (x)])H(\theta,\phi)],\\
&=2(\mathbb{E}_{ \nu_\theta }[D^{\phi} (x)]-\mathbb{E}_{\mu }[D^{\phi} (x)])H(\theta,\phi),\\
\end{aligned}
$$
where $$
H(\theta,\phi)={\mathbb{E}}_{z\sim p(z)}[(\nabla_{\theta}G^{\theta}(z))^T\cdot \nabla_x D^\phi(G^{\theta}(z))]
.$$

 If $\theta^*$ is a local minimizer of $L_{G}(\theta;\phi,\mu)$, then by first-order condition, it satisfies that
\begin{equation}\label{eq:firstorder}
\mathbb{E}_{ \nu_{\theta^*} }[D^{\phi} (x)]-\mathbb{E}_{\mu }[D^{\phi} (x)] =0,
\end{equation}
or
\begin{equation}\label{eq: Hthetaphi}
H(\theta^*,\phi)=\vec{0}.
\end{equation}
By Assumption \ref{assumption:H}, equation \eqref{eq: Hthetaphi} holds only if  $\nu_{\theta^*} =\mu$. Here we focus on the other case \eqref{eq:firstorder}.

Given a parameterization map $\phi'_{\cdot,\cdot}:\mathcal{P(X)}\times\mathcal{P(X)}\rightarrow\Phi$ and $D^{\phi'_{\cdot,\cdot}}\in \mathcal{D}_{\text{Dis}}$, if $\theta^*$ is a local minimizer of $L_{G}(\theta;\phi'_{\mu,\nu_{\theta}},\mu)$ , we must have 

\begin{equation}\label{eq:theta*}
\mathbb{E}_{ \nu_{\theta^*} }[D^{\phi'_{\mu,\nu_{\theta^*}}} (x)]-\mathbb{E}_{\mu }[D^{\phi'_{\mu,\nu_{\theta^*}}} (x)] =0.
\end{equation}
Since $D^{\phi'_{\cdot,\cdot}}\in \mathcal{D}_{\text{Dis}},$ without generality, we set $a=1$ and $c=0$ and have
\begin{equation}
\begin{aligned}
&\mathbb{E}_{ \nu_{\theta} }[D^{\phi'_{\mu,\nu_{\theta}}} (x)]-\mathbb{E}_{\mu }[D^{\phi'_{\mu,\nu_{\theta^*}}} (x)] \\
=& \int_{\mathcal{A}^{\mu,\nu_{\theta}}} D^{\phi'_{\mu,\nu_{\theta}}} (x) (p_{\mu}(x)-p_{\nu_{\theta}}(x))dx\\
>&0,\\ 
\end{aligned}
\end{equation}
for every different $\mu$ and $\nu_\theta$. Hence equality \eqref{eq:theta*} holds if and only if $\nu_{\theta^*}=\mu,$ which completes the proof.
\end{proof}

\subsection{Proof of $f$-divergence in Lemma \ref{theorem: f-div}} 

\begin{proof}
Given the optimal discriminator in \eqref{eq:true_D}, we have 
$$
\begin{aligned}
&\mathbb{E}_{\mu}[D^{\phi^*}(x)]-\mathbb{E}_{\nu_\theta}[D^{\phi^*}(x)]\\
&= \int\limits_{\supp \mu \cup \supp\nu_{\theta}} \frac{p_{\mu}(x)}{p_{\mu}(x)+p_{\nu_{\theta}}(x)}(p_{\mu}(x)-p_{\nu_{\theta}}(x))dx.\\
\end{aligned}
$$
Let $\bar\nu:=\frac{\mu+\nu_{\theta}}{2}$ be the averaged measure defined on ${\supp \mu \cup \supp\nu_{\theta}}$, then we have 
$$
\begin{aligned}
&\mathbb{E}_{\mu}[D^{\phi^*}(x)]-\mathbb{E}_{\nu_\theta}[D^{\phi^*}(x)]\\
&= \int\limits_{\supp \mu \cup \supp\nu_{\theta}} \frac{p_{\mu}(x)}{2p_{\bar\nu}(x)}\frac{(p_{\mu}(x)-p_{\nu_{\theta}}(x))}{p_{\bar\nu}(x)}p_{\bar\nu}(x)dx\\
&= \int\limits_{\supp \mu \cup \supp\nu_{\theta}} \frac{p_{\mu}(x)}{p_{\bar\nu}(x)}
\left(\frac{p_{\mu}(x)}{p_{\bar\nu}(x)}-1\right)p_{\bar\nu}(x)dx\\
&= \int\limits_{\supp \mu \cup \supp\nu_{\theta}} f\left(\frac{p_{\mu}(x)}{p_{\bar\nu}(x)}\right)p_{\bar\nu}(x)dx\\
&= \text{Div}_{f}(\mu\|\bar 
\nu),\\
\end{aligned}
$$
where $f(x):=x(x-1)$ is a convex function and $f(1)=0$. Therefore, $\text{Div}_{f}(\mu\|\bar 
\nu)$ is well-defined $f$-divergence. Furthermore, we can observe that the gradient of the generator objective function $L_G(\theta;\phi^*)$ can be written as the gradient of the squared $f$-divergence:
$$
\begin{aligned}
\nabla_{\theta}L_G(\theta;\phi^*)&=\nabla_{\theta}\mathbb{E}_{\mu}\left[D^{\phi^*}(x)-\mathbb{E}_{\nu_\theta}[D^{\phi^*}(x)]\right]^2\\
&=\nabla_{\theta}\left[\mathbb{E}_{\mu}[D^{\phi^*}(x)]-\mathbb{E}_{\nu_\theta}[D^{\phi^*}(x)]\right]^2\\
&=\nabla_{\theta}[\text{Div}_{f}(\mu|\bar\nu)]^2,
\end{aligned}
$$
which completes the proof.
\end{proof}
\subsection{Proof of improved stability in Theorem \ref{theorem:stability}}

\begin{proof}
Since $D^{\phi_{\epsilon}}(x)=D^{\phi^*} (x)+\epsilon_1(x)$ and $\nabla_x D^{\phi_{\epsilon}}(x) = \nabla_x D^{\phi^*}(x) +\epsilon_2(x)$, we have 
$$
\begin{aligned}
&\nabla_{\theta}L_G(\theta;\phi_{\epsilon})=\left(\mathbb{E}_{\nu_\theta}[D^{\phi_{\epsilon}}(x)]-\mathbb{E}_{\mu}[D^{\phi_{\epsilon}}(x)]\right)H(\theta,\phi_{\epsilon})\\
=& \left(\Delta(\theta,\phi^*)+\bar\epsilon_1(\theta)\right)H(\theta,\phi_{\epsilon}),\\
\end{aligned}
$$
where
$$
\Delta(\theta,\phi)=\mathbb{E}_{\nu_\theta}[D^{\phi}(x)]-\mathbb{E}_{\mu}[D^{\phi}(x)]
$$
and
$$\bar\epsilon_1(\theta)=\mathbb{E}_{\nu_\theta}[\epsilon_1(x)]-\mathbb{E}_{\mu}[\epsilon_1(x)].$$
Because 
$$
\begin{aligned}
H(\theta,\phi_{\epsilon})=&\mathbb{E}_{z\sim \mu_z}[(\nabla_{\theta}G^{\theta}(z))^T\cdot \nabla_x D^{\phi_{\epsilon}}(G^{\theta}(z))]\\
=&H(\theta,\phi^*)+\mathbb{E}_{z\sim \mu_z}[(\nabla_{\theta}G^{\theta}(z))^T\cdot \epsilon_2(x)]\\
=&H(\theta,\phi^*)+\bar{\epsilon}_2 (\theta),\\
\end{aligned}
$$
we have  
$$
\begin{aligned}
\nabla_{\theta}L_G(\theta;\phi_{\epsilon})=&\nabla_{\theta}L_G(\theta;\phi^*)+\bar\epsilon_1(\theta)H(\theta,\phi^*)\\&+\Delta(\theta,\phi^*)\bar\epsilon_2(\theta)+\bar\epsilon_1(\theta)\bar\epsilon_2(\theta).
\end{aligned}
$$
Since both $\bar\epsilon_1(\theta)$ and $\bar\epsilon_2(\theta)$ are weighted averages or linear combinations of centered random noises, they are both centered noises as well. Moreover, the expectation of $\bar\epsilon_1(\theta)\bar\epsilon_2(\theta)$ is also zero since $\epsilon_1(x)$ and $\epsilon_2(x)$ are uncorrelated. Hence the mean of $\nabla_{\theta}L_G(\theta;\phi_{\epsilon})$ equals to $\nabla_{\theta}L_G(\theta;\phi^*).$ By the definition of $\Delta(\theta,\phi)$ and $\bar\epsilon_1(\theta)$, its variance also depends on the difference between $\mu$ and $\nu_{\theta}$, which completes the proof.
\end{proof}

\subsection{Discriminability of different GAN variants}\label{appendix:discriminability}
Here we provide the  of the discriminability of optimal discriminators in these GAN variants described in Table \ref{tab:discriminability}. 

\begin{itemize}
    \item \textbf{Vanilla GAN \cite{goodfellow2014generative}}: GAN employs BCE as the discriminative loss function defined as
\begin{equation}\label{eq:bce}
\begin{aligned}
L_{D}(\phi;\mu,\nu_{\theta})=&\mathbb{E}_{\mu}\left[\log( D^\phi(X))\right]\\&+\mathbb{E}_{\nu_\theta}\left[\log(1-D^\phi(X))\right].    
\end{aligned}
\end{equation}

As proven in \cite{goodfellow2014generative}, the optimal discriminator given binary cross-entropy loss can be derived as: 
 \begin{equation}\label{optD_BCE}
D^{\phi^*_{\mu,\nu_{\theta}}}(x)=\frac{p_{\mu}(x)}{p_{\mu}(x)+p_{\nu_{ \theta}}(x)}.
 \end{equation}
 Let us consider function $f(l)=\frac{1}{1+l}$ for $l>0$. Notice that $f(l)>1/2$ when $l<1$, and $f(l)<1/2$ when $l>1$. Also notice that $D^{\phi^*_{\mu,\nu_{\theta}}}(x)=f(\frac{p_{\nu_{\theta}}(x)}{p_{\mu}(x)})$, it is easy to verify that $(D^{\phi^*_{\mu,\nu_{\theta}}}(x)-1/2)(p_{\mu}(x)-p_{\nu_{ \theta}}(x))>0$ when $p_{\mu}(x)\neq p_{\nu_{\theta}}(x).$

\item \textbf{Least Square GAN \cite{mao2017least}}: LSGAN employs least square loss function defined  as follows:
$$
\begin{aligned}
L_{D}(\phi;\mu,\nu_{\theta})=&-\mathbb{E}_{\mu}\left[(D^\phi(X)-\alpha)^2\right]\\&-\mathbb{E}_{\nu_\theta}\left[( D^\phi(X)-\beta)^2\right],
\end{aligned}
$$
where $\alpha,\beta\in\mathbb{R}$, and $\alpha\neq\beta$. The optimal discriminator is given as 
 \begin{equation}\label{optD_LS}
D^{\phi^*_{\mu,\nu_{\theta}}}(x)=\frac{\alpha p_{\mu}(x)+\beta p_{\nu_{ \theta}}(x)}{p_{\mu}(x)+p_{\nu_{ \theta}}(x)},
 \end{equation}
 Similarly, by the fact that $D^{\phi^*}(x)=f(\frac{p_{\nu_{ \theta}}(x)}{p_{\mu}(x)})$, where $f(l)=\frac{\alpha +\beta l}{1+l}$, we can verify that this discriminator also has (strict) discriminability in the sense that $\sign(\alpha-\beta)(D^{\phi^*_{\mu,\nu_{\theta}}}(x)-\frac{\alpha+\beta}{2})(p_{\mu}(x)-p_{\nu_{\theta}}(x))>0$ when $p_{\mu}(x)\neq p_{\nu_{\theta}}(x).$

\item \textbf{Geometric GAN \cite{lim2017geometric}}:  Hinge loss function is defined as
$$
\begin{aligned}
L_{D}(\phi;\mu,\nu_{\theta})=&-\mathbb{E}_{\mu}\left[\max(0,1-D^\phi(X))\right]\\&-\mathbb{E}_{\nu_\theta}\left[\max(0,1+D^\phi(X))\right].
\end{aligned}
$$
By Lemma B.1 in \cite{lim2017geometric}, it is straightforward to show that the optimal discriminator can be derived as:
 \begin{equation}\label{eq:hinge}
D^{\phi^*_{\mu,\nu_{\theta}}}(x)= 2\mathds{1}_{\{p_{\mu}(x)\geq p_{\nu_{\theta}}(x)\}}-1.
 \end{equation}
It is clear that $D^{\phi^*}(x)=f(\frac{p_{\mu}(x)}{p_{\nu_{ \theta}}(x)})$, where $f(l)=2\mathds{1}_{\{l\geq 1\}}-1$, and  $D^{\phi^*}(x)(p_{\mu}(x)-p_{\nu_{ \theta}}(x))>0$ when $p_{\mu}(x)\neq p_{\nu_{\theta}}(x).$

\item \textbf{Energy-based GAN \cite{zhao2016energy}}:  Energy-based loss function is defined as
$$
\begin{aligned}
L_{D}(\phi;\mu,\nu_{\theta})=&- \mathbb{E}_{\mu}\left[D^\phi(X)\right]\\&-\mathbb{E}_{\nu_\theta}\left[\max(0,m-D^\phi(X))\right].
\end{aligned}
$$
where $m>0$. By Lemma 1 in \cite{zhao2016energy}, the optimal discriminator given energy-based loss function 
can be derived as:
 \begin{equation}\label{eq:energe}
D^{\phi^*_{\mu,\nu_{\theta}}}(x)= m\mathds{1}_{\{p_{\mu}(x)<p_{\nu_{\theta}}(x)\}}.
 \end{equation}
It is straight forward to verify that  $-m(D^{\phi^*_{\mu,\nu_{\theta}}}(x)-m/2)(p_{\mu}(x)-p_{\nu_{\theta}}(x))>0$ when $p_{\mu}(x)\neq p_{\nu_{\theta}}(x).$ 

\item \textbf{$f$-GAN \cite{nowozin2016f}}:
In $f$-GAN, variational lower bound (VLB) on the $f$-divergence $\text{Div}_{f}(\mu||\nu_{\theta})$ is used in the generative-adversarial approach to mimic the target distribution $\nu_{\theta}$. Let $f:\mathbb{R}_{+}\to\mathbb{R}$ be a convex, lower-semicontinuous function. In $f$-GAN, the discriminative loss is defined as the variational lower bound  on certain $f$-divergence:
 \begin{equation}\label{eq:fdiv}
\begin{aligned}
    L_{D}(\phi;\mu,\nu_{\theta})=&\mathbb{E}_{\mu}\left[D^\phi(X)\right]\\&-\mathbb{E}_{\nu_\theta}\left[f^*(D^\phi(X))\right],
\end{aligned}
 \end{equation}
where $f^*$ is the convex conjugate function of $f$. Under mild conditions on function $f$ \cite{nguyen2010estimating}, the maximum of \eqref{eq:fdiv} is achieved when 
 \begin{equation}\label{eq:fgan}
D^{\phi^*_{\mu,\nu_{\theta}}}(x)= f'\left(\frac{p_{\mu}(x)}{p_{\nu_{\theta}}(x)}\right),
 \end{equation}
where $f'$ is the first order derivative of $f$ and increasing due to the convexity of $f$. Consequently, we can choose $c=f'(1)$ such that  $(D^{\phi^*_{\mu,\nu_{\theta}}}(x)-c)(p_{\mu}(x)-p_{\nu_{\theta}}(x))>0$ when $p_{\mu}(x)\neq p_{\nu_{\theta}}(x).$ A more detailed list of $f$-divergence can be found in \cite{nowozin2016f}.
\end{itemize}

\newpage
\section{Evaluation metrics}\label{sec:con_numeric}
In this section, we provide detailed introduction to those test metrics used in our numerical experiment.

\subsection{Evaluation metrics of image generation}

To assess the quality of images generated, we employ three quality metrics
\emph{Inception Score} (IS), \emph{Fr\'echet Inception Distance} (FID), and
\emph{Intra Fr\'echet Inception Distance} (IFID) together with two recognizability
metrics \emph{Weak Accuracy} (WA) and \emph{Strong Accuracy} (SA).

Inception Score \cite{salimans2016improved} (IS) is a popular metric
to evaluate the variety and distinctness of the generated images. It is given by
\begin{equation}
     \text{IS}=\exp\{\mathbb{E}_{X\sim
     \nu_\theta}\left[D_{\text{KL}}(P(Y|X)||P(Y))\right]\},
\end{equation} where $D_{\text{KL}}$ is the KL-divergence between the
conditional class distribution $P(Y|X)$ and marginal class distribution of the
$P(Y)=\mathbb{E}[P(Y|\mathbf{X})]$. The conditional class distribution $P(Y|X)$
is computed by \emph{InceptionV3} network pre-trained on ImageNet. The higher
IS, the better the quality. By the definition, the IS does not consider real
images, so cannot measure how well the fake measure induced by the generator is
close to the real distribution. Other limitations, as noted in
\cite{barratt2018note}, are: high sensitivity to small changes in weights of the
Inception network, and large variance of scores. To consider both diversity and
realism, the following FID and IFID are employed as well.

Fr\'echet Inception Distance \cite{heusel2017gans} (FID) compares the
distributions of Inception embeddings of real and generated images, denoted by
$p_{d}$ and $p_{\theta}$ respectively. Under the assumption that the features of
images extracted by the function $f$ are of multivariate normal distribution.
The FID score of $p_{\theta}$ w.r.t $p_{d}$  is defined as
\begin{align}\label{eq:fid}
\text{FID}(p_{d},p_{\theta})=&\|\mathbf{\mu}_r-\mathbf{\mu}_g\|^2\\&+\text{Tr}(\mathbf{\Sigma}_r+\mathbf{\Sigma}_g-2(\mathbf{\Sigma}_r\mathbf{\Sigma}_g)^{\frac{1}{2}}), \notag
\end{align}

where $(\mathbf{\mu}_r,\mathbf{\Sigma}_r)$ and
$(\mathbf{\mu}_g,\mathbf{\Sigma}_g)$ denote the mean and covariance matrix of
the feature of real and generated image distribution respectively. Given a
data-set of images $\{x_i\}^N_{i}$ and the Inception embedding function $f$, the
Gaussian parameters $(\mathbf{\mu}_r,\mathbf{\Sigma}_r)$ are  then approximated
as 
\begin{align*}
\mathbf{\mu}&=\frac{1}{N}\sum^N_{i=0}f(x_i),\\
\Sigma&=\frac{1}{N-1}\sum_{i=0}^N(f(x^{(i)})-\mathbb{\mu})(f(x^{(i)})-\mathbb{\mu})^T.
\end{align*}

We can see from \eqref{eq:fid} that FID directly compares the distribution of
features of real and fake images. However, the Gaussian assumption made in FID
computation might not be met in practice. Also, FID has high sensitivity to the
sample size — a small size might cause over-estimation of the real FID.

Intra Fr\'echet Inception Distance \cite{devries2019evaluation} (IFID) is used to quantify intra-class
diversity. It is defined as the average of conditional FID given every class
$y\in\mathcal{Y}$, i.e., \[
\text{FID}(p_{d},p_{\theta})=\frac{1}{|\mathcal{Y}|}\sum_{y\in\mathcal{Y}}\text{FID}(p_{d}(y),p_{\theta}(y)),
\] where 
\begin{eqnarray*}&&\text{FID}(p_{d}(y),p_{\theta}(y))= 
\|\mathbf{\mu}_r(y)-\mathbf{\mu}_g(y)\|^2\\&&+\text{Tr}(\mathbf{\Sigma}_r(y)+\mathbf{\Sigma}_g(y)-2(\mathbf{\Sigma}_r(y)\mathbf{\Sigma}_g(y))^{\frac{1}{2}}).
\end{eqnarray*}

The combination of IS, FID, and IFID provides a comprehensive evaluation for generated image quality assessment. IS and FID are measured between 50K generated images given 10 different random seeds in this paper, and IFID is the average intra-class results of FID.

Recognizability is as crucial as realism and diversity in a good image
generative model, therefore two classification accuracy are adopted—a weak accuracy (WA)
measured by a two-layer convolutional neural network
\footnote{https://pytorch.org/tutorials/beginner/blitz/cifar10\_tutorial.html}
and a strong accuracy (SA) by the ResNet-50 \cite{he2016deep}. Both
classifiers are pre-trained on the same training set as for the generative
model. The WA discerns subtle differences for better inter-model comparison,
while the SA is more accurate for intra-model latent space analysis.

\subsection{Evaluation metrics of timeseries generation}
In the following context, we describe the definition of the test metrics precisely, more detailed discussions  can be found in \cite{liao2024sig,xiao2023signature}. 

\begin{itemize}
\item \textbf{ABS metric on marginal distribution}: 
The \textbf{ABS metric} is a histogram-based distributional metrics where we compare the empirical density function (epdf) of real data and synthetic data. When talking about epdf, we mean each bin's raw count divided by the total number of counts and the bin width. For each feature dimension $i \in \{1, \cdots, d\}$, we denote the epdfs of real data and synthetic data as $\hat{df}_{r}^{i}$ and $\hat{df}_{G}^{i}$ respectively. Here the epdfs of synthetic date $\hat{df}_{G}^{i}$ is computed on the bins derived from the histogram of real data. The ABS metric is defined as the absolute difference of those two epdfs averaged over feature dimension, i.e.
\begin{align*}
\frac{1}{d}\sum_{i = 1}^{d} \vert \hat{df}_{r}^{i} - \hat{df}_{G}^{i} \vert_{1},
\end{align*} 
where $\vert \hat{df}_{r}^{i} - \hat{df}_{G}^{i} \vert_{1}$ is computed as the $l_1$ distance between the epdfs of real and synthetic data on each bin. Notice that although the ABS metric cannot give a fully point-separating metric on the space of measure, it can still provide a general description of the similarity between two set of data given a reasonable number of bins. Considering the computational cost, we set number of bins to 50 in our implementation.

\item \textbf{ACF metric on temporal dependency}: We use the absolute error of the auto-correlation estimator by real data and synthetic data as the metric to assess the temporal dependency and name it as \textbf{ACF metric}. For each feature dimension $i \in \{1, \ldots, d\}$, we compute the auto-covariance of the $i^{th}$ coordinate of time series data $X$ at lag $\tau$ under real measure and synthetic measure resp, denoted by $\rho_r^{i}(\tau)$ and $\rho_{G}^{i}(\tau)$. Then the estimator of the lag-$\tau$ auto-correlation of the real/synthetic data is given by $\frac{\rho_{r}^{i}(\tau)}{\rho_{r}^{i}(0)}$/ $\frac{\rho_{G}^{i}(\tau)}{\rho_{G}^{i}(0)}$. 
The ACF metric is defined to be the absolute difference of  auto-correlation up to lag $\tau$ given as follows:
\begin{align*}
\frac{1}{d\tau}\sum_{k = 1}^{\tau}\sum_{i = 1}^{d}\left \vert \frac{\rho_{r}^{i}(k)}{\rho_{r}^{i}(0)} -  \frac{\rho_{G}^{i}(k)}{\rho_{G}^{i}(0)}\right \vert.
\end{align*}

\item \textbf{Corr metric on feature dependency}: For $d>1$, we assess the feature dependency by using the $l_1$ norm of the difference between cross-correlation matrices and name it as \textbf{Corr metric}. To be specific, let $\tau^{i, j}_{r}$ and $\tau^{i, j}_{G}$ denote the correlation of the $i^{th}$ and $j^{th}$ feature of time series under real measure and synthetic measure resp.
The correlation metric between the real data and synthetic data is given by $l_{1}$ norm of the difference between two correlation matrices, i.e.
\begin{align*}
\frac{1}{d^2}\sum_{i = 1}^{d}\sum_{j = 1}^{d}|\tau^{i, j}_{r} - \tau^{i, j}_{G}|.
\end{align*}

\item \textbf{$R^2$ error for usefulnese}: In order to be useful, the synthetic data should inherit the predictive characteristics of the original, meaning that the synthetic data should be just as useful as the real data when used for the same predictive purpose (i.e. train-on-synthetic, test-on-real). To measure the usefulness of the synthetic data, we follow \cite{esteban2017real, yoon2019time} and consider the problem of predicting next-step temporal vectors using the lagged values of time series using the real data and synthetic data. First, we train a supervised learning model on real data to predict next-step values and evaluate it in terms of $R^{2}$ (TRTR). Then we train the same supervised learning model on synthetic data and evaluate it on the real data in terms of $R^{2}$ (TSTR). The closer two $R^{2}$ are, the better the generative model is. The predictive score is then defined as the \textbf{$R^{2}$ relative error}. This test metric is reasonable because it demonstrates the ability of the synthetic data to be used
for real applications.

\end{itemize}

\section{Architectures, hyperparameters, and training techniques}\label{appendix:numdetail}

\subsection{Image generation}
\subsubsection{Backbone architectures}
We employ BigGAN
\cite{brock2018large} and cStyleGAN2 \cite{karras2020analyzing} architectures as
the backbones for image generation experiments. BigGAN \cite{brock2018large}, as a member of projection-based cGAN,
is a collection of recent best practices in conditional image generation, and it
is widely used due to its satisfactory generation performance on high-fidelity
image synthesis. We use the BigGAN architecture with the same regularization
methods like \emph{Exponential Moving Averages} (EMA)
\cite{karras2017progressive} and \emph{Spectral Normalization}
\cite{miyato2018spectral} have already been adopted. We adopt BigGAN's PyTorch
implementation \footnote{\url{https://github.com/PeterouZh/Omni-GAN-PyTorch}}
and shows the architectural details in Table \ref{table_CIFAR10_network} for
completeness.   

\begin{table}[htbp]
\centering
\resizebox{\columnwidth}{!}{%
\begin{subtable}[htbp]{0.7\columnwidth}
\centering
\begin{tabular}{c}
\toprule
$z\in \mathbb{R}^{128}\sim \mathcal{N}(0,I)$ 
\\\midrule
SNLinear $ 128 \to 4\times4\times4ch$
\\\midrule
GResBlock up $4ch\to 4ch$
\\\midrule
GResBlock up $4ch\to 4ch$
\\\midrule
GResBlock up $4ch\to 4ch$
\\\midrule
BN, ReLU, $3\times 3$ SNConv $4ch\to3$
\\\midrule
Activation: Tanh
 \\\bottomrule
\end{tabular}
\caption{Generator}
\end{subtable}
\hfill
\begin{subtable}[htbp]{0.7\columnwidth}
\centering
\begin{tabular}{c}
\toprule
RGB image $x\in  \mathbb{R}^{32\times 32\times 3}$ 
\\\midrule
DResBlock down $3 \to 4ch$
\\\midrule
DResBlock down $4ch\to 4ch$
\\\midrule
DResBlock down $4ch\to 4ch$
\\\midrule
DResBlock down $4ch\to 4ch$
\\\midrule
SumPooling
\\\midrule
SNLinear $4ch\to 1$, $\text{embed}(y)\in \mathbb{R}^{256}$
 \\\bottomrule
\end{tabular}
\caption{Discriminator}
\end{subtable}
}
\caption{BigGAN architecture used in CIFAR-10 and CIFAR-100experiment, where $ch$ is set as $64$.}\label{table_CIFAR10_network}
\end{table}

cStyleGAN2 \cite{zhao2020differentiable}, is an improved and conditional version of
the original StyleGAN, is a generative adversarial network (GAN) architecture
designed for creating high-quality, diverse images. It addresses artifacts,
enhances image quality, and has been widely used for generating realistic
portraits and artwork. We adopt the code from the Github repository in
\cite{zhao2020differentiable}
\footnote{\url{https://github.com/mit-han-lab/data-efficient-gans/tree/master}}.
and use the default hyperparameter setting. The only
difference is the minor modification we made to incorporate our MC method.

\subsubsection{Loss functions} Since our MCGAN replaces the original generative
loss in Eqn. \eqref{orign_gloss} with the regression loss in Eqn.
\eqref{eq:unobj} and keeps the discriminator loss, we use two popular
discriminator's loss functions as baselines: the Hinge loss baseline and the BCE loss baseline.

\subsubsection{Hyperparameters} For the CIFAR-10 experiment, the batch size is set
to 32. We adopt the Adam optimizer in all experiments, with betas being 0.0 and
0.999. For both of the generator and discriminator, the learning rates are set to
0.0002 and the weight decay is 0.0001. The model is updated by using \emph{Exponential Moving Average} starting
after the first 5000 iterations. The generator is updated once every 3 times the
discriminator is updated. For the CIFAR-100 experiment, we have fewer training
samples for each class, so we update the discriminator 4 times per generator
training as a more accurate discriminator is needed. Each experiment is conducted on one Quadro RTX 8000 GPU. All experiments are conducted using fixed and default seed settings in the code base.

\subsubsection{Leaky Clamp } To stabilize the training of our regression loss, we employ the \emph{Leaky Clamp}
function to limit the discriminator output in a
reasonable range so that the distance between fake and real discriminator output
will not exceed a predetermined range. The leaky clamp function is defined as \[
C_{(lb,ub)}(x)=\begin{cases} lb+\alpha (x-lb)\quad&\text{if $x\leq lb$}\\
x\quad&\text{if $lb\leq x\leq ub$}\\ ub+\alpha (x-ub)\quad&\text{if $ub<x$}
\end{cases} \] where $\alpha\in (0,1)$ is a small slope for values outside the
range $[lb,ub]$. Just similar to the negative slope in Leaky ReLU
\cite{maas2013rectifier}, the $\alpha$ in the Leaky Clamp function is used to
prevent from vanishing gradient problem. And by applying this Leaky Clamp on the
discriminator output when computing the regression loss, we are able to mitigate the early
collapse problem.

\paragraph{Data augmentation} To alleviate the overfitting and improve the
generalization on the small training set, especially for CIFAR-100 where each
class has scarce samples, we increase data efficiency by using the
\emph{Differentiable Augmentation} (DiffAug) \cite{zhao2020differentiable} which
imposes various types of augmentations on real and fake samples
\cite{zhao2020differentiable,karras2020training,zhang2019consistency}. We adopt \emph{Translation + Cutout} policy as suggested in \cite{zhao2020differentiable}. Besides, we also apply horizontal flips when loading the training dataset as in \cite{kang2021rebooting}.

\subsection{Video generation}
\subsubsection{Backbone architecture} For  both the generator and discriminator, the backbone we used in conditional video generation task  is called convolutional LSTM (ConvLSTM) unit proposed by \cite{shi2015convolutional} due to its effectiveness in video prediction tasks. For both generator and discriminator, the number of layers is 2 and the hidden dimension is specified as 64. The convolutional kernel size is set as (3,3) with padding (1,1). The activation function used is ReLU. In the model training, the generator takes in 5 past frames as the input and generates the corresponding 1-step future frame, then the real past frames and the generated future frames are concatenated along time dimension and put into the discriminator.
 
\subsubsection{Hyperparameter}
For moving MNIST dataset, the batch size is set to 16. And the frame size is downsampled from 64 to 32 and MC size is specified as 4 to reduce GPU memory consumption. The generator is updated every time the discriminator is updated.  The Adam optimizer is adopted, with betas being 0.5 and
0.999 and learning rate being 2e-4. 

\subsection{Timeseries generation}
\subsubsection{Backbone architecture}  The RNN model we employed in conditional timeseries generation task is built up using the AR-FNN architecture introduced in \cite{liao2024sig}. The AR-FNN is defined as a composition of PReLUs, residual layers and affine transformations. Its inputs are the past $p$-lags of the $d$-dimensional process we want to generate as well as the $d$-dimensional noise vector. A formal definition can be found in \cite{liao2024sig}. 

\subsubsection{Hyperparameter} For both low-dim VAR and stock datasets, the hidden dimension of AR-FNN is set as 50 with 3 number of layers. To increase model's capacity in generating high-dim time-series, we increased the hidden dimension to 128 for $d=10,50,100$ in VAR experiment. Similar to image generation experiment, the Adam optimizer is used with betas being (0, 0.9) and learning rate being 2e-4 for both discriminator and generator. The number of total training epochs is 1000 with batch size specified as 100.  The generator is updated every 4 times the discriminator is updated.  For all time-series experiment, we set MC size as 1000 due to less GPU memory consumed for time-series data.

\section{Supplementary numerical results}
In this section, we present the supplementary numerical results on both image generation and  timeseries generation.
\subsection{Image generation}
\subsubsection{Sensitivity analysis of MC sample size}
In our experiments, we use a Monte Carlo sample size (MC) of 10 for BigGAN and 4 for StyleGAN2 as
outlined. We've conducted a sensitivity analysis using BigGAN on CIFAR-10 datasets. The results can be found in Table \ref{tab:mc_sensitivity}.

\begin{table}[!ht]
\centering
\resizebox{\columnwidth}{!}{%
\begin{tabularx}{0.7\textwidth}{c| Y|Y| Y|Y|Y|Y}
\toprule
\multicolumn{1}{l|}{Metrics} & \multicolumn{1}{c|}{w/o MC} &   \multicolumn{1}{c|}{MC=5} &
\multicolumn{1}{c|}{MC=10}&
\multicolumn{1}{c|}{MC=15} &   \multicolumn{1}{c|}{MC=20} & \multicolumn{1}{c}{MC=25}
\\\midrule
FID $\downarrow$   & 4.34 &  3.37& 3.55& 3.61&  4.99& 4.50
\\\midrule
IS $\uparrow$   & 9.41 & 9.69 & 9.96 & 10.07 & 10.32 & 10.16
 \\\bottomrule
\end{tabularx}}
\caption{Sensitivity analysis of MC sample size based on BigGAN trained on CIFAR-10 dataset}\label{tab:mc_sensitivity}
\end{table}

A sweet spot around MC=10 is observed with the best FID and a good IS. A larger MC size of fake samples
increases the variability in the IS score but causes the generated distribution to diverge from the real data.
The optimal size may vary for different datasets, indicating the need (limitation) for fine-tuning. Future
work could explore its adaptive strategies.

\subsubsection{Failure case study on FFHQ256}
To evaluate the quality of generated samples, we compared the baseline model (StyleGAN2-ada) with our proposed method (StyleGAN2-ada+MC). We generated 480 samples from each model for a qualitative analysis. Our observations reveal that only the baseline model produces samples with missing facial components, as illustrated in Figure \ref{fig:subfig1}. This suggests that our method captures facial structures more effectively. Furthermore, generating coherent faces under occlusion is challenging. As shown in Figures \ref{fig:subfig2} and \ref{fig:subfig3} , our method produces more realistic facial structures behind the microphone, which we attribute to the strong supervision provided by our innovative loss function.

\subsubsection{Training time}
The training time for the expectation discriminator output as generator loss is approximately linear w.r.t Monte Carlo size for fixed GPU memory. Table \ref{tab:training_time} compares training times for our BigGAN CIFAR-10 experiments.

\begin{table}[!ht]
\caption{Training time (seconds) for 5k iterations.}
\centering
\resizebox{\linewidth}{!}{
\begin{tabularx}{0.65\textwidth}{l|Y|Y|Y|Y|Y}
\hline
\textbf{Method} & \textbf{w/o MC} & \textbf{M=5} & \textbf{M=10} & \textbf{M=15} & \textbf{M=20} \\ \hline
\textbf{Time (s)} & 976 & 1560 & 2171 & 2868 & 3498 \\ \hline
\end{tabularx}
}
\label{tab:training_time}
\end{table}

\begin{figure}[!ht]
    \centering
    \begin{subfigure}{\linewidth}
        \centering
        \includegraphics[width=\textwidth]{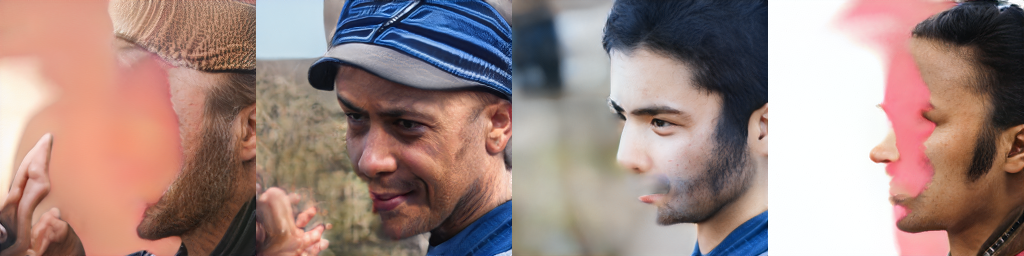}
        \caption{Samples with missing facial features generated by StyleGAN2-ada;}
        \label{fig:subfig1}
    \end{subfigure}
    
    \begin{subfigure}{\linewidth}
        \centering
        \includegraphics[width=\textwidth]{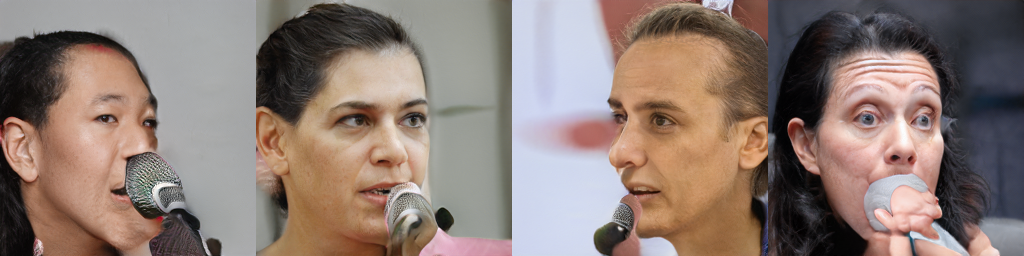} 
        \caption{Samples with mic generated by StyleGAN2-ada;}
        \label{fig:subfig2}
    \end{subfigure}
    \begin{subfigure}{\linewidth}
        \centering
        \includegraphics[width=\textwidth]{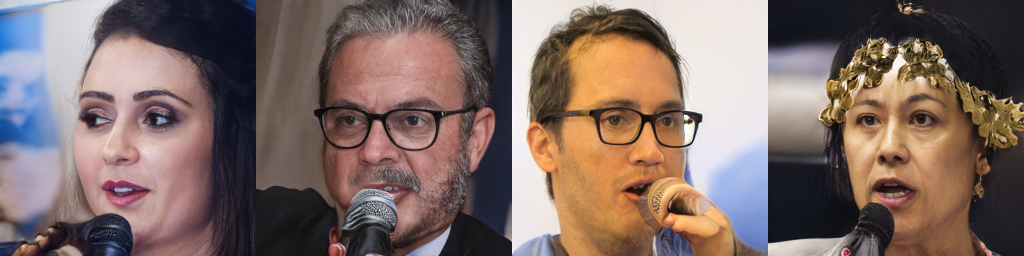} 
        \caption{Samples with mic generated by StyleGAN2-ada + MC.}
        \label{fig:subfig3}
    \end{subfigure}
    \caption{Failure cases generated by baseline and our model}
    \label{fig:ffhq_badcase}
\end{figure}

\subsection{Timeseries generation} In this section, we present our numerical results on conditional timeseries generation .
\subsubsection{VAR dataset}
For VAR dataset, the evaluation metrics are give in Tables \ref{table_Var_dim=1}, \ref{table_Var_dim=2} and \ref{table_Var_dim=3}. From these tables, we can see that our MCGAN has considerable improvement over RCGAN across different parameter settings and dimensions. Also, the long-term ACF shown in Figure \ref{fig:long_acf_VAR_MCGAN} illustrates that MCGAN can better capture the temporal dependence than RCGAN.

\begin{table}[!ht]
\caption{Numerical results of $\operatorname{VAR}(1)$ for $d = 1$}\label{table_Var_dim=1}
\centering 
\resizebox{\linewidth}{!}{
\begin{tabularx}{0.8\textwidth}{l*3{|Y}}
\toprule
\multicolumn{1}{l|}{}     & \multicolumn{3}{c}{Temporal Correlations}                                                       \\ \midrule
Settings                  & \multicolumn{1}{c}{$\phi=0.2$} & \multicolumn{1}{c}{$\phi=0.5$} & \multicolumn{1}{c}{$\phi=0.8$} \\ \midrule
\multicolumn{4}{c}{Metric on marginal distribution}                                                                                              \\ \midrule
SigCWGAN&0.00522&0.00610&\textbf{0.00381}\\
MCGAN&\textbf{0.00402}&\textbf{0.00501}&0.00384\\
TimeGAN&0.0259&0.02735&0.01691\\
RCGAN&0.00443&0.00683&0.00464\\
GMMN&0.00678&0.00659&0.00554\\
\midrule
\multicolumn{4}{c}{Absolute difference of lag-1 autocorrelation}                                                                                                \\ \midrule
SigCWGAN&0.00947&0.01464&\textbf{0.00182}\\
MCGAN&0.00648&0.02047&0.00324\\
TimeGAN&0.04269&0.04526&0.01651\\
RCGAN&\textbf{0.00266}&0.01943&0.00531\\
GMMN&0.01232&\textbf{0.00106}&0.00618\\
 \midrule
\multicolumn{4}{c}{Relative $R^2$ error (\%)}                                        \\ \midrule
SigCWGAN&0.45011&\textbf{0.12953}&\textbf{0.00654}\\
MCGAN&\textbf{0.15403}&0.39642&0.03417\\
TimeGAN&7.44523&2.12036&1.38983\\
RCGAN&2.16534&0.93133&0.19214\\
GMMN&0.34882&1.36565&2.10632\\
\midrule
\multicolumn{4}{c}{Sig-$W_1$ distance}              \\ \midrule
SigCWGAN&0.69598&\textbf{1.09869}&\textbf{2.34807}\\
MCGAN&\textbf{0.69529}&1.10365&2.35118\\
TimeGAN&0.71696&1.12885&2.37692\\
RCGAN&0.69653&1.0995&2.35203\\
GMMN&0.70083&1.10592&2.3526\\
\bottomrule
\end{tabularx}
}
\end{table}

\begin{table}[!ht]
\caption{Numerical results of $\operatorname{VAR}(1)$ for $d = 2$}\label{table_Var_dim=2}
\resizebox{\linewidth}{!}{
\begin{tabularx}{0.8\textwidth}{l|Y|Y|Y||Y|Y|Y}
\toprule
\multicolumn{1}{c|}{}         & \multicolumn{3}{c||}{Temporal Correlations (fixing $\sigma=0.8$)}                                                   & \multicolumn{3}{c}{Feature Correlations (fixing $\phi=0.8$)}                                 \\ \midrule
\multicolumn{1}{c|}{Settings} & \multicolumn{1}{c|}{$\phi=0.2$}      & \multicolumn{1}{c|}{$\phi=0.5$}      & \multicolumn{1}{c||}{$\phi=0.8$}      & \multicolumn{1}{c|}{$\sigma=0.2$}    & \multicolumn{1}{c|}{$\sigma=0.5$}    & $\sigma=0.8$    \\ \midrule
\multicolumn{7}{c}{Metric on marginal distribution}            \\ \midrule
\multicolumn{1}{l|}{SigCWGAN}&0.01177&\textbf{0.00537}&\textbf{0.00365}&\textbf{0.00383}&\textbf{0.00277}&\textbf{0.00365}\\
\multicolumn{1}{l|}{MCGAN}&\textbf{0.00384}&0.00651&0.00538&0.00457&0.00502&0.00538\\
\multicolumn{1}{l|}{TimeGAN}&0.02059&0.02187&0.01113&0.00933&0.01099&0.01113\\
\multicolumn{1}{l|}{RCGAN}&0.00613&0.00706&0.00466&0.00607&0.00886&0.00466\\
\multicolumn{1}{l|}{GMMN}&0.00861&0.00912&0.00601&0.00474&0.00476&0.00601\\

\midrule

\multicolumn{7}{c}{Absolute difference of lag-1 autocorrelation}                                                                                                 \\ \midrule
\multicolumn{1}{l|}{SigCWGAN}&\textbf{0.00658}&\textbf{0.00248}&\textbf{0.00419}&\textbf{0.00353}&0.00555&\textbf{0.00419}\\	
\multicolumn{1}{l|}{MCGAN}&0.02137&0.04051&0.00716&0.00438&0.00840&0.00716\\	
\multicolumn{1}{l|}{TimeGAN}&0.04433&0.04567&0.00822&0.02446&\textbf{0.00442}&0.00822\\	
\multicolumn{1}{l|}{RCGAN}&0.01857&0.04249&0.03218&0.01227&0.03571&0.03218\\	
\multicolumn{1}{l|}{GMMN}&0.00699&0.02081&0.04263&0.08085&0.05893&0.04263\\	

\midrule
\multicolumn{7}{c}{$L_1$-norm of real and generated cross-correlation matrices}                                           \\ \midrule
\multicolumn{1}{l|}{SigCWGAN}&0.00804&0.01113&\textbf{0.01122}&\textbf{0.00476}&0.01198&\textbf{0.01122}\\	
\multicolumn{1}{l|}{MCGAN}&0.02653&0.01502&0.01149&0.01381&0.03842&0.01149\\	
\multicolumn{1}{l|}{TimeGAN}&0.08622&0.07002&0.07494&0.07455&0.04685&0.07494\\	
\multicolumn{1}{l|}{RCGAN}&0.01200&0.02846&0.03460&0.08187&0.03317&0.03460\\	
\multicolumn{1}{l|}{GMMN}&\textbf{0.00745}&\textbf{0.00565}&0.02705&0.00973&\textbf{0.00917}&0.02705\\	

\midrule
\multicolumn{7}{c}{Relative $R^2$ error (\%).}\\ \midrule
\multicolumn{1}{l|}{SigCWGAN}&\textbf{1.24036}&\textbf{0.09027}&\textbf{0.01252}&\textbf{0.01381}&\textbf{0.01248}&\textbf{0.01252}\\
\multicolumn{1}{l|}{MCGAN}&10.24923&1.61850&0.42136&0.26449&0.35319&0.42136\\
\multicolumn{1}{l|}{TimeGAN}&40.1273&4.92783&1.21018&1.05100&0.89636&1.21018\\
\multicolumn{1}{l|}{RCGAN}&18.33682&4.31191&1.39435&3.94201&1.58417&1.39435\\
\multicolumn{1}{l|}{GMMN}&35.25094&15.76457&6.56956&12.42385&9.88914&6.56956\\
\midrule
\multicolumn{7}{c}{Sig-$W_1$ distance}                         \\ \midrule
\multicolumn{1}{l|}{SigCWGAN}&\textbf{1.92823}&2.42590&\textbf{3.60068}&\textbf{3.02208}&3.23497&\textbf{3.60068}\\	
\multicolumn{1}{l|}{MCGAN}&1.93087&\textbf{2.42466}&3.61617&3.02879&3.2390&3.61617\\	
\multicolumn{1}{l|}{TimeGAN}&1.98070&2.47622&3.63571&3.04472&3.26746&3.63571\\	
\multicolumn{1}{l|}{RCGAN}&1.93333&2.43379&3.61464&3.03564&\textbf{3.21083}&3.61464\\	
\multicolumn{1}{l|}{GMMN}&1.94517&2.43949&3.60922&3.02910&3.23898&3.60922\\	

\bottomrule
\end{tabularx}
}
\end{table}

\begin{table}[!ht]
\caption{Numerical results of $\operatorname{VAR}(1)$ for $d = 3$}\label{table_Var_dim=3}
\centering
\resizebox{\linewidth}{!}{
\begin{tabularx}{0.8\textwidth}{l|Y|Y|Y||Y|Y|Y}
\toprule
\multicolumn{1}{c|}{}         & \multicolumn{3}{c||}{Temporal Correlations (fixing $\sigma=0.8$)} & \multicolumn{3}{c}{Feature Correlations (fixing $\phi=0.8$)} \\ \midrule
\multicolumn{1}{c|}{Settings} & $\phi=0.2$          & $\phi=0.5$          & $\phi=0.8$          & $\sigma=0.2$       & $\sigma=0.5$       & $\sigma=0.8$       \\ \midrule
\multicolumn{7}{c}{Metric on marginal distribution}                                                                                                            \\ \midrule
\multicolumn{1}{l|}{SigCWGAN}&0.01463&0.01240&\textbf{0.00477}&\textbf{0.00423}&\textbf{0.00452}&\textbf{0.00477}\\
\multicolumn{1}{l|}{MCGAN}&\textbf{0.00476}&\textbf{0.00436}&0.00596&0.00715&0.00661&0.00596\\
\multicolumn{1}{l|}{TimeGAN}&0.02359&0.02096&0.00886&0.01054&0.00915&0.00886\\
\multicolumn{1}{l|}{RCGAN}&0.01068&0.00634&0.00577&0.00836&0.00597&0.00577\\
\multicolumn{1}{l|}{GMMN}&0.01001&0.01024&0.00987&0.01427&0.01323&0.00987\\
\midrule
\multicolumn{7}{c}{Absolute difference of lag-1 autocorrelation}                                                                                              \\ \midrule
\multicolumn{1}{l|}{SigCWGAN}&\textbf{0.00570}&\textbf{0.00508}&\textbf{0.00131}&\textbf{0.00330}&\textbf{0.00172}&\textbf{0.00131}\\
\multicolumn{1}{l|}{MCGAN}&0.00684&0.01805&0.0199&0.00947&0.00529&0.0199\\
\multicolumn{1}{l|}{TimeGAN}&0.04601&0.09309&0.01643&0.03144&0.04736&0.01643\\
\multicolumn{1}{l|}{RCGAN}&0.05663&0.04925&0.02041&0.01894&0.01863&0.02041\\
\multicolumn{1}{l|}{GMMN}&0.04041&0.06024&0.08998&0.10196&0.13395&0.08998\\
\midrule
\multicolumn{7}{c}{$L_1$-norm of real and generated cross-correlation matrices}                                                                               \\ \midrule
\multicolumn{1}{l|}{SigCWGAN}&\textbf{0.01214}&\textbf{0.01311}&\textbf{0.00317}&\textbf{0.01715}&\textbf{0.02862}&\textbf{0.00317}\\
\multicolumn{1}{l|}{MCGAN}&0.04076&0.03819&0.03659&0.04631&0.08001&0.03659\\
\multicolumn{1}{l|}{TimeGAN}&0.20056&0.43239&0.15509&0.09314&0.09228&0.15509\\
\multicolumn{1}{l|}{RCGAN}&0.24082&0.16809&0.09657&0.16257&0.11514&0.09657\\
\multicolumn{1}{l|}{GMMN}&0.09850&0.12638&0.20142&0.3096&0.37507&0.20142\\
\midrule
\multicolumn{7}{c}{Relative $R^2$ error (\%).}                                                                                               \\ \midrule
\multicolumn{1}{l|}{SigCWGAN}&\textbf{1.393190}&\textbf{0.34009}&\textbf{0.07690}&\textbf{0.05323}&\textbf{0.03498}&\textbf{0.07690}\\
\multicolumn{1}{l|}{MCGAN}&14.63272&1.62564&0.56412&0.32549&0.42388&0.56412\\
\multicolumn{1}{l|}{TimeGAN}&36.71498&8.94899&2.38110&2.61944&3.80723&2.38110\\
\multicolumn{1}{l|}{RCGAN}&70.69909&16.50512&2.83140&1.44543&2.79532&2.83140\\
\multicolumn{1}{l|}{GMMN}&152.87792&38.97992&17.94085&25.12542&26.93346&17.94085\\
\midrule
\multicolumn{7}{c}{Sig-$W_1$ distance}                                                                                                                        \\ \midrule
\multicolumn{1}{l|}{SigCWGAN}&11.57390&\textbf{17.66105}&30.46722&30.75008&25.24824&30.46722\\
\multicolumn{1}{l|}{MCGAN}&11.57797&17.69298&30.48571&\textbf{30.73360}&\textbf{25.22319}&30.48571\\
\multicolumn{1}{l|}{TimeGAN}&11.88320&18.09083&30.70047&30.86857&25.36035&30.70047\\
\multicolumn{1}{l|}{RCGAN}&\textbf{11.53368}&17.72101&\textbf{30.40070}&30.74105&25.30295&\textbf{30.40070}\\
\multicolumn{1}{l|}{GMMN}&11.61313&17.73444&30.59544&30.79028&25.38754&30.59544\\
\midrule
\bottomrule
\end{tabularx}
}
\end{table}

 \begin{figure}[!ht]
        \centering
        \includegraphics[width=1\linewidth]{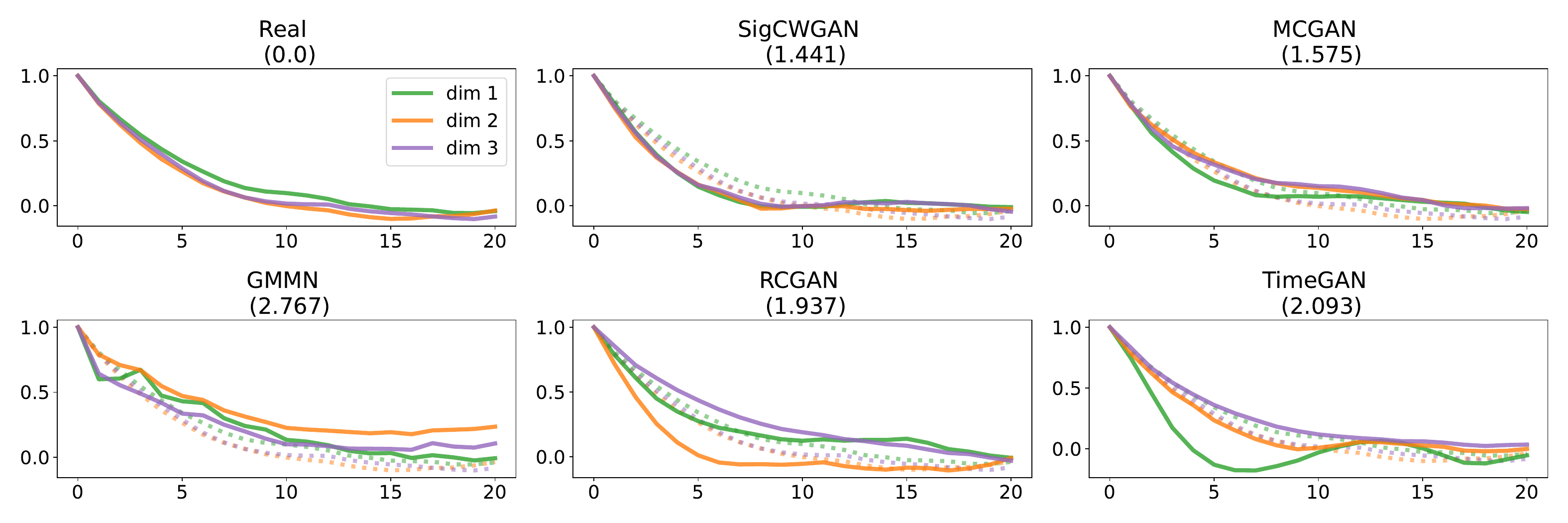}
        \caption{ACF plot for each channel on the 3-dimensional VAR(1) dataset  with autocorrelation coefficient
$\phi = 0.8$ and co-variance parameter $\sigma = 0.8$. Here $x$-axis represents the lag value ( with a maximum lag equal to 100) and the $y$-axis represents the corresponding auto-correlation. The length of the real/generated time series used to compute the ACF is 1000. The number in the bracket under each model is the sum of the absolute difference between the correlation coefficients computed from real (dashed line) and generated (solid line) samples.}
        \label{fig:long_acf_VAR_MCGAN}
    \end{figure}

\subsubsection{Stocks dataset} For stock dataset, we generate both log return and log volatility process of S\&P 500 and DJI . The estimated cross-corelation matrices  are presented in Figure \ref{Fig:Correlation_Comparison_SPX}, we can see that the generated cross-corelation matrices by MCGAN are closer to the ground truth comparing with that of RCGAN, indicating the effectiveness of our MC method in  capturing cross dependency.

\setcounter{figure}{9}
\begin{figure}[!ht]
\centering
\includegraphics[width=1\linewidth]{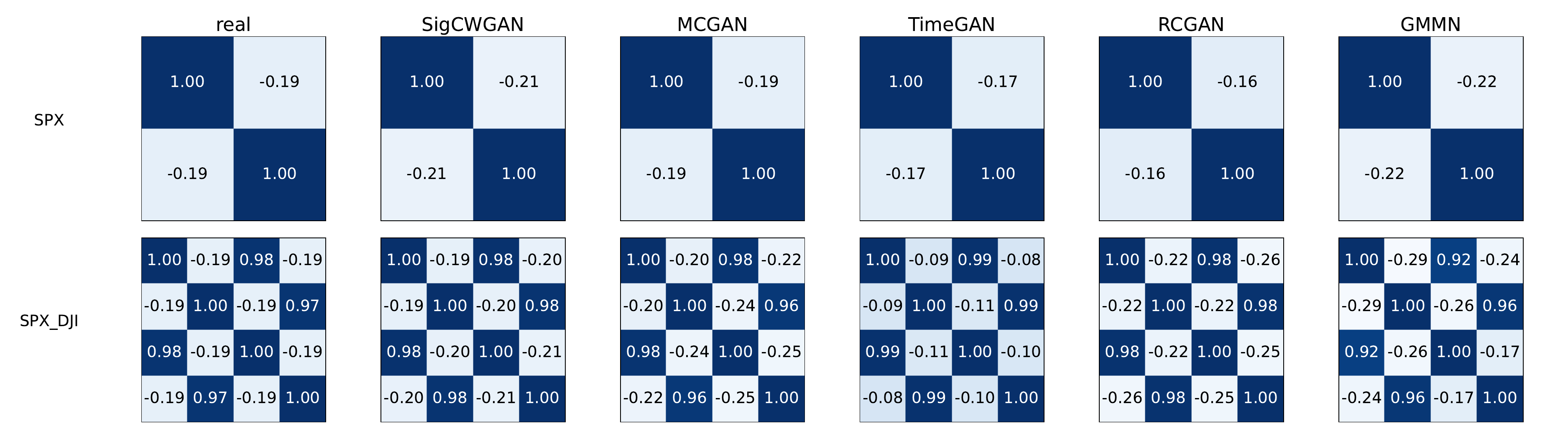}
\caption{Comparison of real and synthetic cross-correlation matrices for SPX/ SPX and DJI data. On the far left the real cross-correlation matrix from SPX/ SPX and DJI log-return and log-volatility data is shown. $x$/$y$-axis represents the feature dimension while the color of the $(i, j)^{th}$ block represents the correlation of $X_{t}^{(i)}$ and $X_{t}^{(j)}$. Observe that the historical correlation between log returns and log volatility is negative, indicating the presence of leverage effects, i.e. when log returns are negative, log volatility is high.}\label{Fig:Correlation_Comparison_SPX}
\end{figure}

\subsection{Synthetic 2D dataset}
We also conducted experiments using 2D grid-like synthetic data with 25 Gaussian modes and 5,000 generated samples. Our MCGAN registered all 25 modes with Total Variation TV=$14.64\pm5.50$, outperforming vanilla GAN (17 modes, TV=$35.23\pm2.02$) and LSGAN (20 modes, TV=$29.72\pm6.40$), indicating our ability to alleviate mode collapse. Increasing MC size to 50/100 reduced TV further to $6.99\pm4.67$/$3.54\pm2.17$ as shwon in Table \ref{tab:synthetic_results}. Visualizations can be found in Figure.

\begin{table}[!ht]
\caption{Results on 2D synthetic data. Test metrics are computed using 5000 generated samples for 10 different seeds. The label MC=$n$ indicates that MCGAN is used with a Monte Carlo sample size of $M=n$.}
\centering
\resizebox{\linewidth}{!}{
\begin{tabularx}{0.7\textwidth}{l*3{|Y}}
\hline
\textbf{Method} & \textbf{\# Registered Modes} & \textbf{\# Registered Points} & \textbf{Total Variation} \\ \hline
GAN    & $17.4 \pm 3.1$  & $4511.8 \pm 63.86$  & $35.23 \pm 2.02$   \\ \hline
LSGAN    & $20.4 \pm 1.2$    & $4464.2 \pm 182.85$ & $29.72 \pm 6.40$  \\ \hline
MC =10    & $25 \pm 0.0$      & $4659.4 \pm 75.72$   & $14.64 \pm 5.50$   \\ \hline
MC=50    & $25 \pm 0.0$      & $\mathbf{4807.8 \pm 21.07}$  & $6.99 \pm 4.68$   \\ \hline
MC=100   & $\mathbf{25 \pm 0.0}$     & $4800.4 \pm 59.85$   & $\mathbf{3.54 \pm 2.17}$   \\ \hline
\end{tabularx}
}
\label{tab:synthetic_results}
\end{table}

\begin{figure}[!ht]
	\centering
	\begin{subfigure}{0.45\linewidth}
		\centering
		\includegraphics[width=0.9\textwidth]{Figures/GaussianMixture/GAN_gradient_field_epoch_99500_regitered modes: 19.00, registered points: 4610.00, TV: 38.62.png}
		\caption{Vanilla GAN (TV=38.62)}
	\end{subfigure}
        \begin{subfigure}{0.45\linewidth}
		\centering
		\includegraphics[width=0.9\textwidth]{Figures/GaussianMixture/LSGAN_gradient_field_epoch_99500_regitered modes: 22.00, registered points: 4614.00, TV: 24.97.png}
		\caption{LSGAN (TV=24.97)}
	\end{subfigure}
 	\begin{subfigure}{0.45\linewidth}
		\centering
		\includegraphics[width=0.9\textwidth]{Figures/GaussianMixture/MC10_gradient_field_epoch_99500_regitered modes: 24.00, registered points: 4633.00, TV: 14.30.png}
		\caption{MC=10 (TV=14.40)}
	\end{subfigure}
	\begin{subfigure}{0.45\linewidth}
		\centering
		\includegraphics[width=0.9\textwidth]{Figures/GaussianMixture/MC100_gradient_field_epoch_99500_regitered modes: 25.00, registered points: 4790.00, TV: 2.84.png}
		\caption{MC=100 (TV=2.84)}
	\end{subfigure}
\caption{Example of generated samples by different methods; Red points are 5000 real samples with 0.01 standard deviation;  Blue points are 5000 generated samples; Dash lines illustrates gradients of discriminator on each point.}\label{fig:gaussianmixture}
\end{figure}

\section{Generated Samples}
In this section, we show some generated samples generated by our MCGAN models including images, time-series.

\subsection{Generated CIFAR-10 samples by BigGAN backbone}
Here we present the samples generated by BigGAN backbone in Figure \ref{fig:cifar10_biggan}. We can see that only a few generated figures are misclassified, showcasing the ability of MCGAN in genrating high-fidility samples.

\subsection{Generated CIFAR-10 samples by StyleGAN2 backbone}
Here we present the samples generated by StyleGAN2 backbone in Figure \ref{fig:stylegan2}. Comparing with Figure \ref{fig:cifar10_biggan}, fewer generated samples are misclassified due to the employment of a much stronger backbone.
\setcounter{figure}{8}
\begin{figure}[!ht]
    \centering
\includegraphics[width=\linewidth]{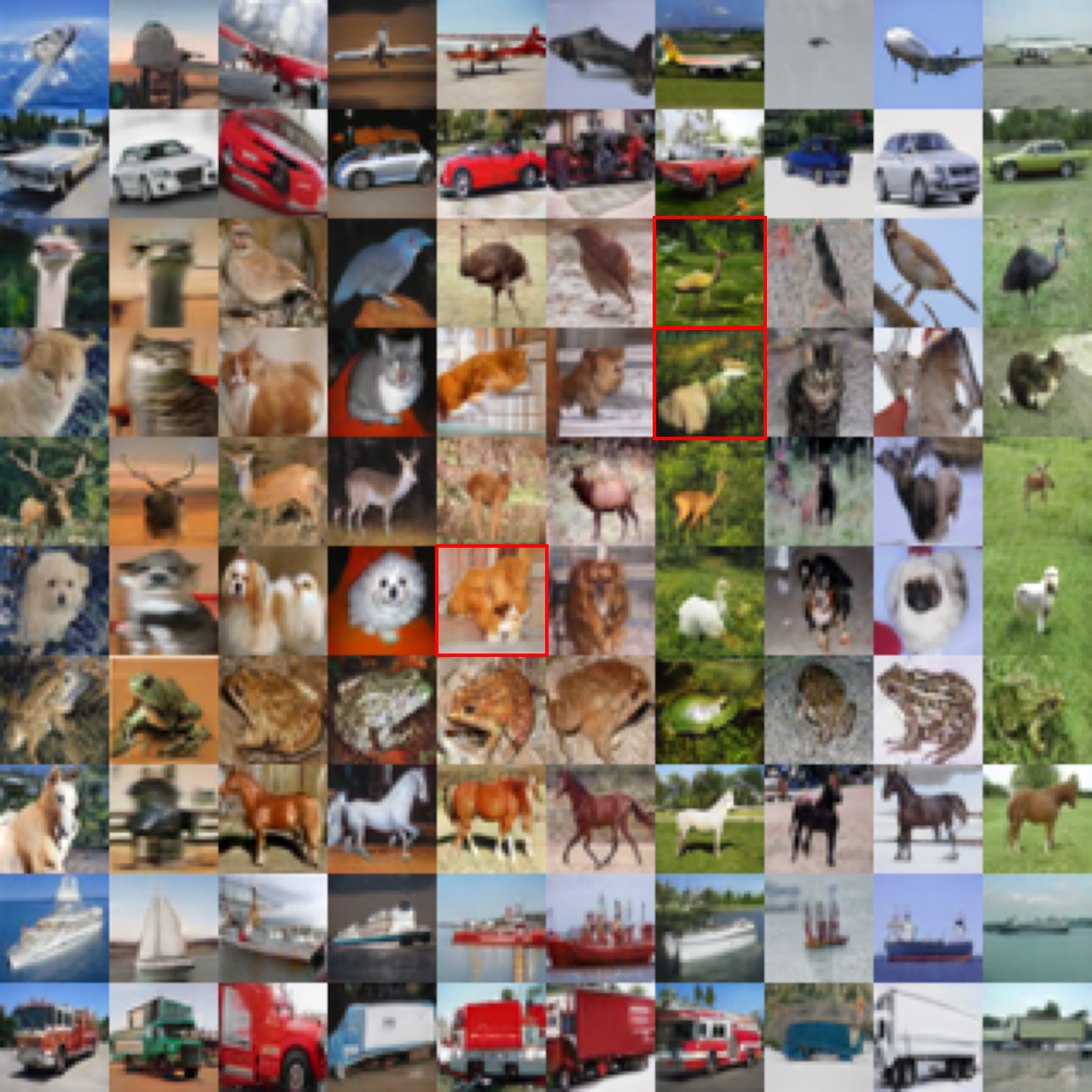}
    \caption{CIFAR-10 samples generated by the cStyleGAN2 backbone trained via Hinge + DiffAug + MC. Images in each row belong to one of the 10 classes. Images misclassified by ResNet-50 are in red boxes.}
    \label{fig:stylegan2}
\end{figure}

\subsection{Generated FFHQ256 samples by StyleGAN2 backbone}
In this subsection, we present some FFHQ256 samples generated by StylGAN2 trained via our MC methods in Figure \ref{fig:ffhq}. These 64 images are randomly picked out of 480 generated samples. We can observe from Figure \ref{fig:ffhq} human faces with different skins, ages, angles, lighting and accessories, showcasing that StyleGAN2 trained via our MC method has the ability to generate realistic, diversified, and  high-resolution human face images.
\setcounter{figure}{10}
\begin{figure}[!ht]
    \centering
\includegraphics[width=\linewidth]{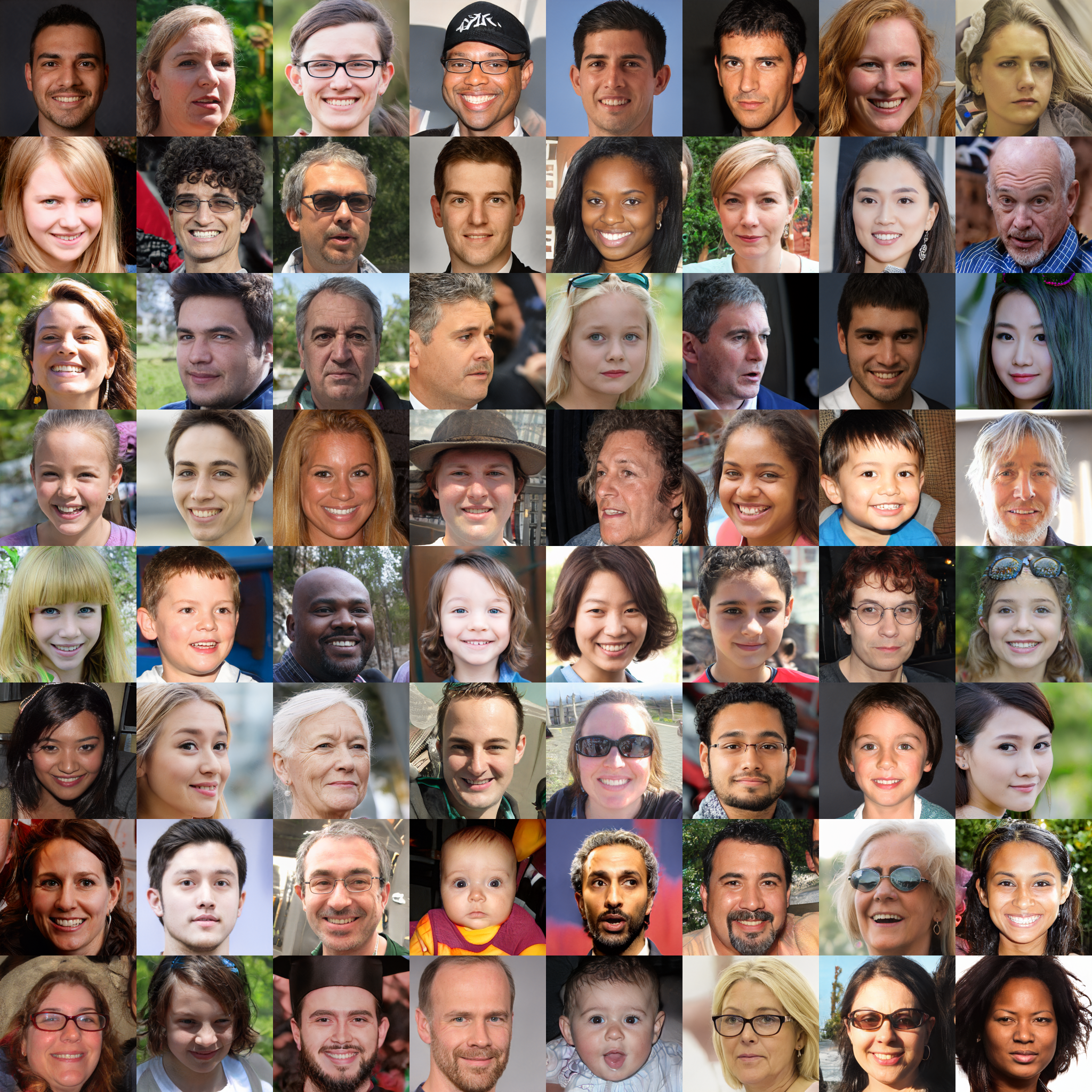}
    \caption{FFHQ256 samples generated by the StyleGAN2 backbone trained via our MC method with FID $3.77\pm 0.04$}
    \label{fig:ffhq}
\end{figure}

\subsection{Generated ImageNet64 samples by StyleGAN2 backbone}
In this subsection, we present 100 samples generated by cStylGAN2 backbone trained via our MC methods in Figure \ref{fig:imagenet64}. Due to the large scale of ImageNet64 dataset, it is a rather challenging conditional generation
 task.

\begin{figure}[!ht]
    \centering
\includegraphics[width=1\linewidth]{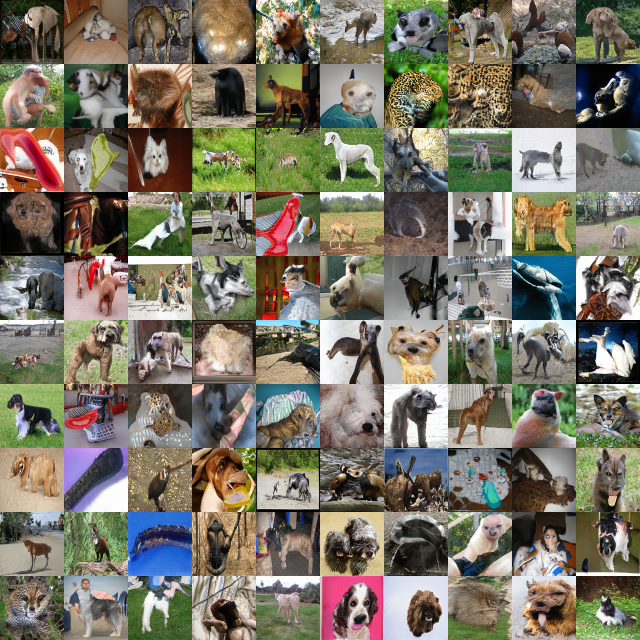}
    \caption{100 ImageNet64 samples generated by the cStyleGAN2 backbone trained via our MC method with FID $16.76\pm0.08$.}
    \label{fig:imagenet64}
\end{figure}

\subsection{Generated stock data samples by RNN backbone}
In this subsection, we present generated SPX log-return paths of each model in Figure \ref{fig:examplepath}. We can see that visually the log-return path generated by MCGAN shows volatility clustering property and looks closer to the historical path than that of RCGAN.  We also provide comparison of marginal distributions of ground truth and generated paths of MCGAN in Figure \ref{fig:mcgan_stock_density}. We can see that the generated histogram is very close to the historical one in terms of mean, standard deviation, skewness and kurtosis.

\begin{figure}[!ht]
	\centering
	\begin{subfigure}{0.45\linewidth}
		\centering
		\includegraphics[width=0.9\textwidth]{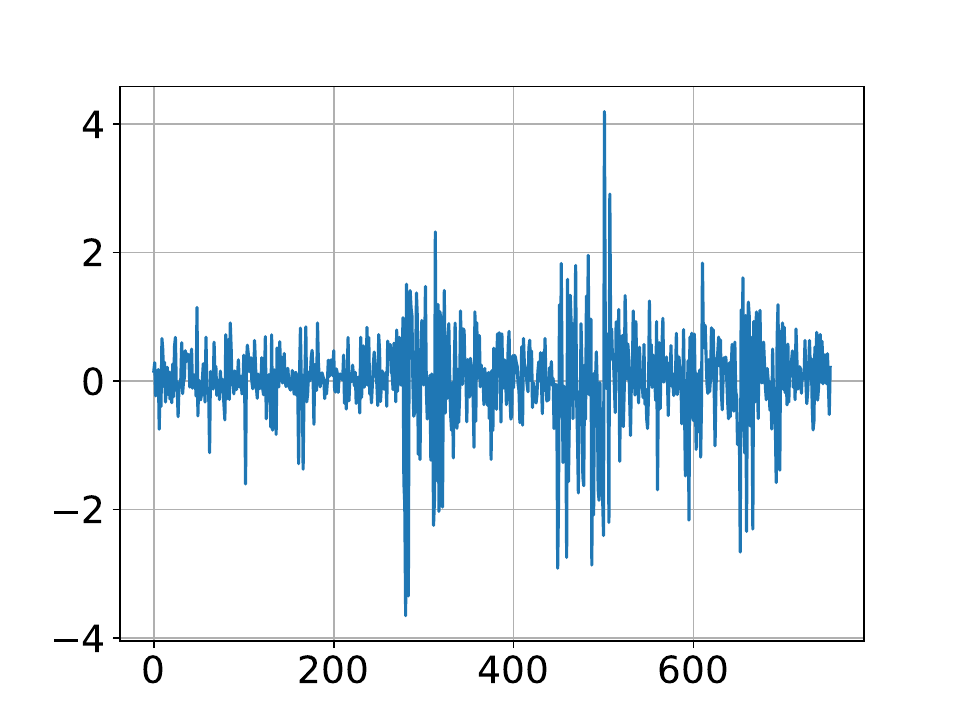}
		\caption{Real}
	\end{subfigure}
        \begin{subfigure}{0.45\linewidth}
		\centering
		\includegraphics[width=0.9\textwidth]{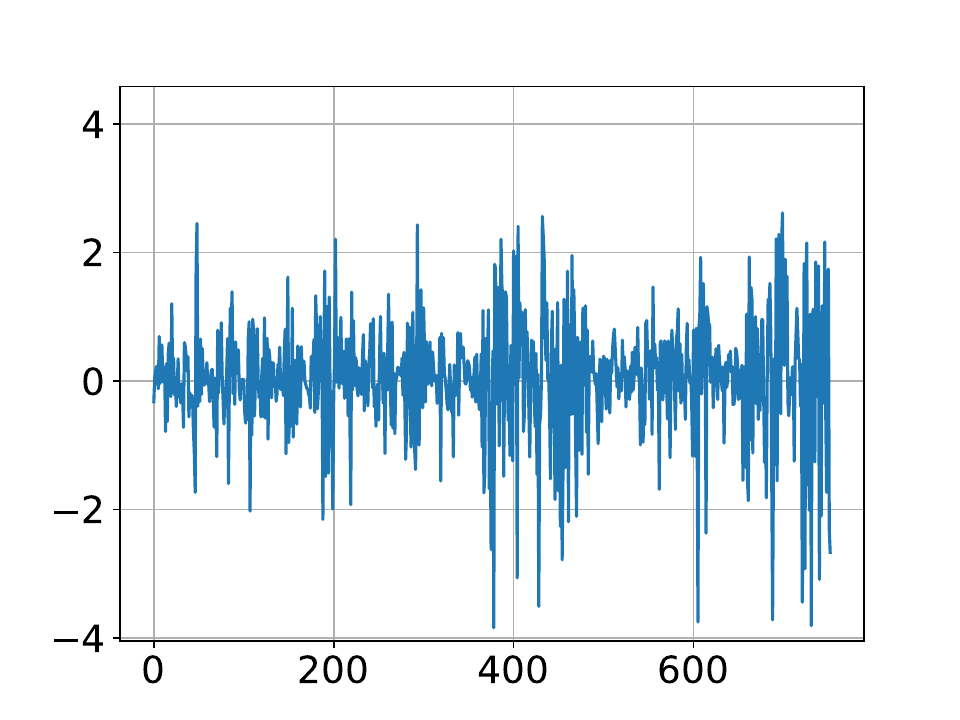}
		\caption{MCGAN}
	\end{subfigure}
 	\begin{subfigure}{0.45\linewidth}
		\centering
		\includegraphics[width=0.9\textwidth]{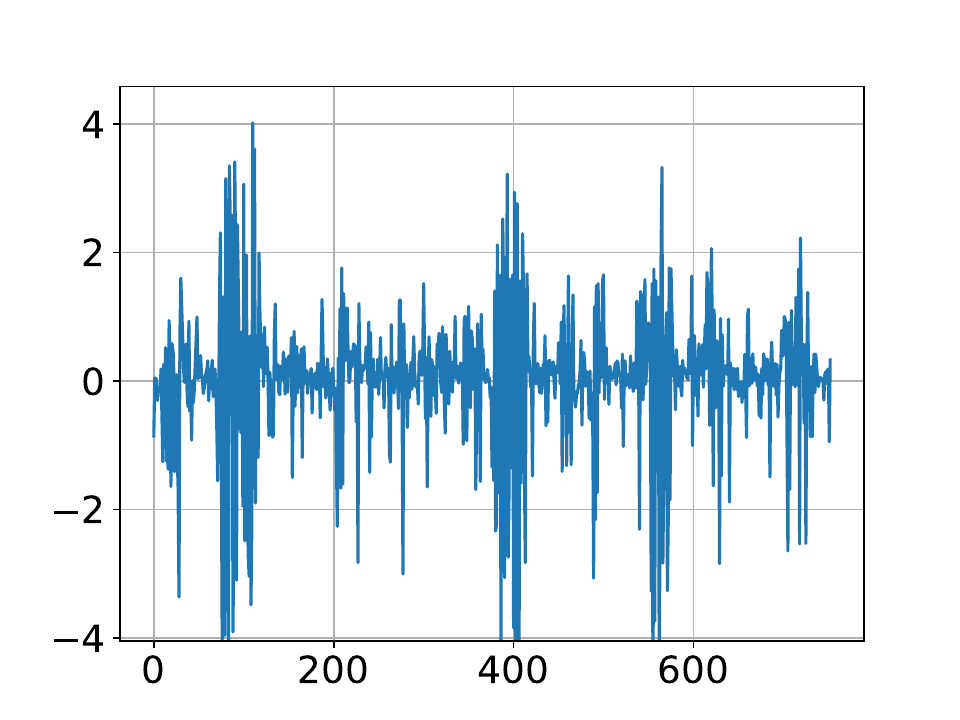}
		\caption{SigCWGAN}
	\end{subfigure}
	\begin{subfigure}{0.45\linewidth}
		\centering
		\includegraphics[width=0.9\textwidth]{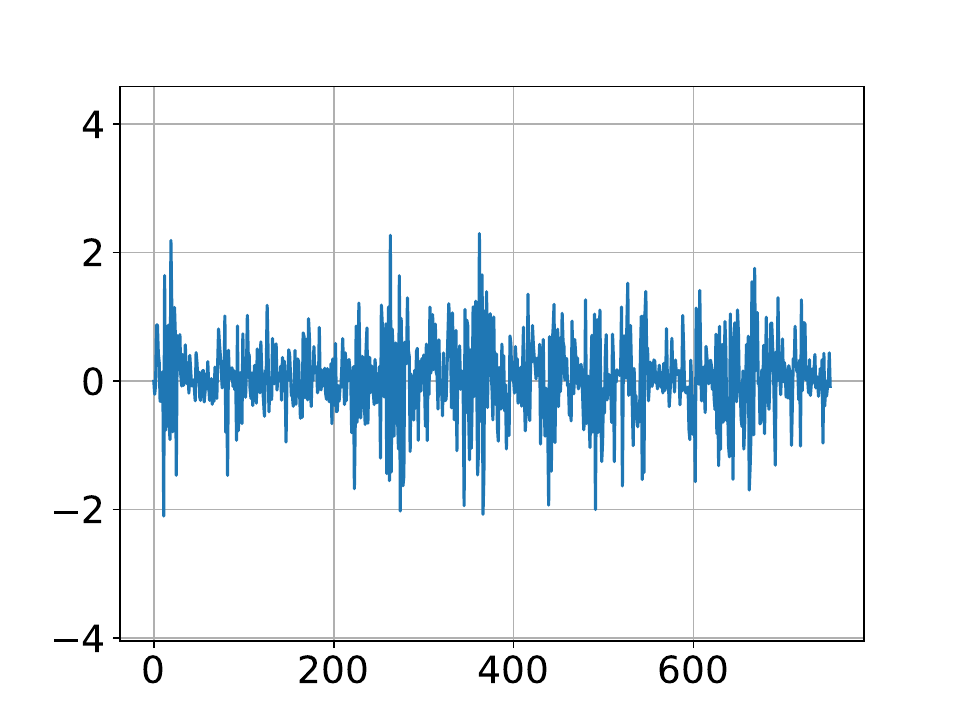}
		\caption{TimeGAN}
	\end{subfigure}
	\begin{subfigure}{0.45\linewidth}
		\centering
		\includegraphics[width=0.9\textwidth]{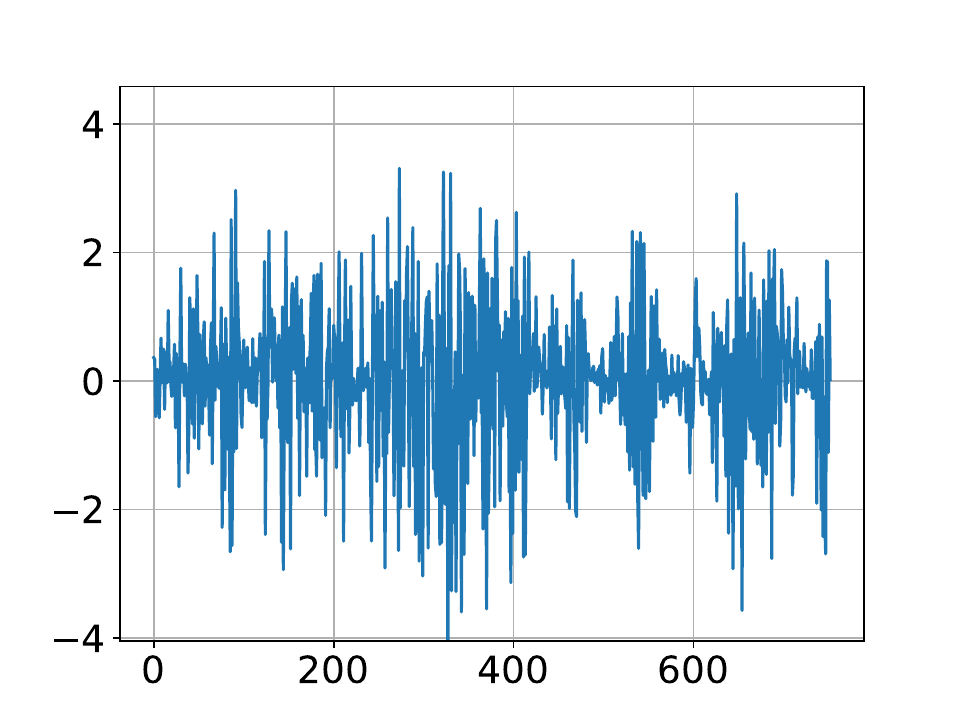}
		\caption{RCGAN}	
	\end{subfigure}
	\begin{subfigure}{0.45\linewidth}
		\centering
		\includegraphics[width=0.9\textwidth]{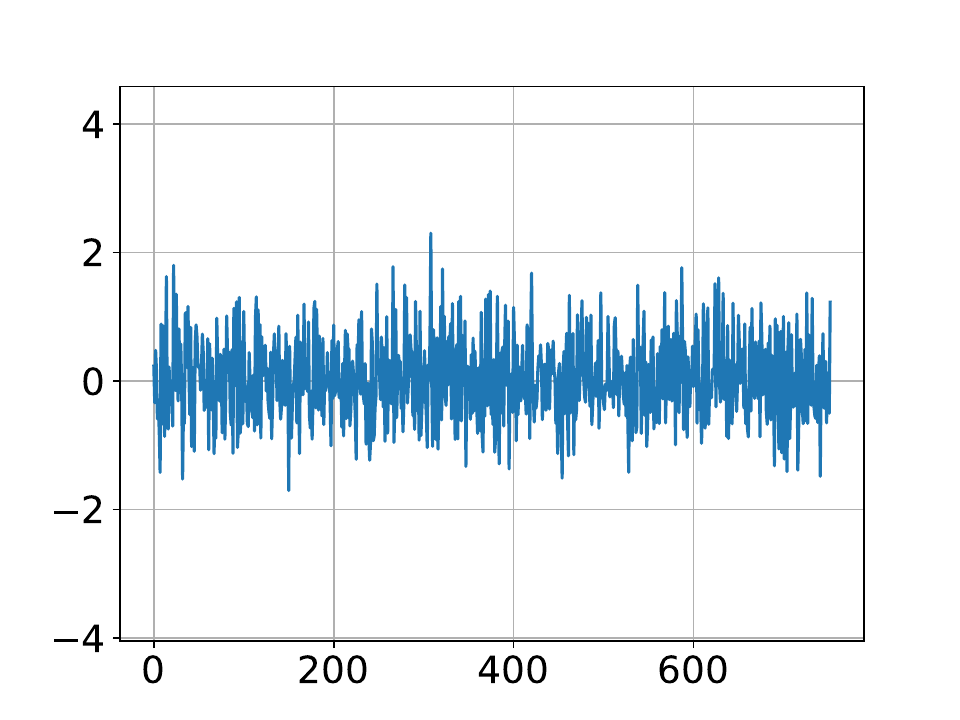}
		\caption{GMMN}
	\end{subfigure}
\caption{Example paths of SPX log returns generated by each model. Since the path of DJI log returns is similar to that of SPX, there is no need to make another plot for DJI. }\label{fig:examplepath}
\end{figure}

\begin{figure}[htb]
\centering 
\includegraphics[width=\linewidth]{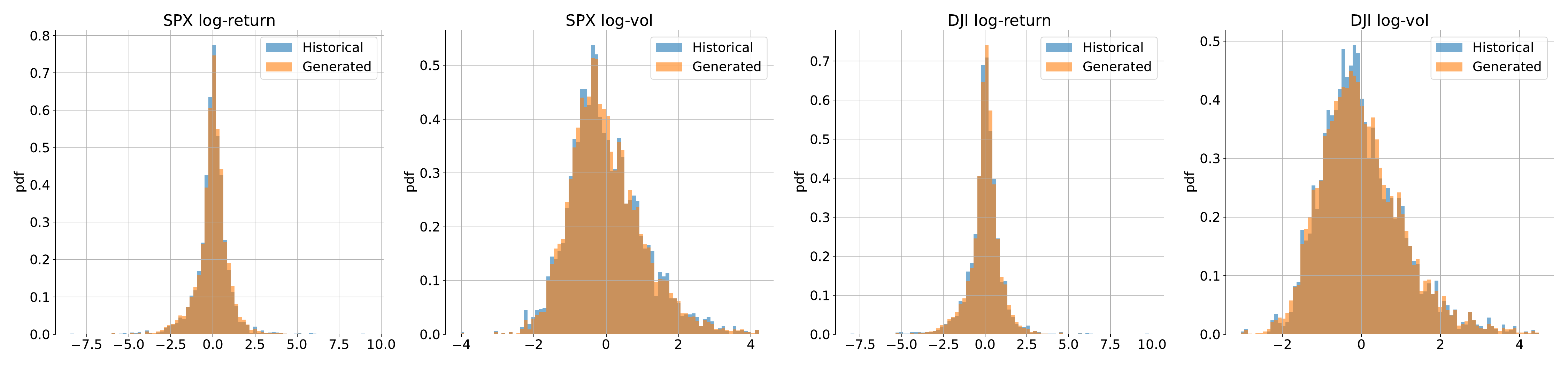}
\caption{Comparison of the marginal distributions of the  paths generated by MCGAN and the SPX and DJI data.}
\label{fig:mcgan_stock_density}
\end{figure}
\end{document}